\title{
	From Monte Carlo to Las Vegas:\\ 
	Improving Restricted Boltzmann Machine Training Through Stopping Sets
}
\author{ Pedro H. P. Savarese \\
   Toyota Technological Institute at Chicago\\
   Chicago, IL, 60637 \\
   \texttt{savarese@ttic.edu} \\
   \And
   Mayank Kakodkar\\
   Department of Computer Science\\
   Purdue University\\
   West Lafayette, IN, 47903\\
   \texttt{mkakodkar@purdue.edu}
   \And
   Bruno Ribeiro\\
   Department of Computer Science\\
   Purdue University\\
   West Lafayette, IN, 47903\\
   \texttt{ribeiro@cs.purdue.edu}
}
\newcommand{\EE}{\mathbb{E}}
\newcommand{\bx}{{\bf x}}
\newcommand{\bX}{{\bf X}}
\newcommand{\bW}{{\bf W}}
\newtheorem{theorem}{Theorem}
\newcommand{\one}{{\bf 1}}
\newtheorem{lemma}{Lemma}
\newtheorem*{lemma*}{Lemma}
\newcommand{\cL}{\mathcal{L}}
\newtheorem{definition}{Definition}
\newcommand{\bv}{{\bf v}}
\newcommand{\bh}{{\bf h}}
\newcommand{\cS}{\mathcal{S}}
\newcommand{\cC}{\mathcal{C}}
\newcommand{\cG}{\mathcal{G}}
\newcommand{\by}{{\bf y}}
\newcommand{\bS}{\textsf{\textsl{S}}}
\DeclareMathOperator*{\argmax}{\arg\!\max}
\newtheorem{corollary}{Corollary}
\newcommand{\ba}{{\bf a}}
\newcommand{\bb}{{\bf b}}
\newcommand{\cH}{\mathcal{H}}
\newcommand{\LVname}{{\em Las Vegas Slope}\xspace}
\newcommand{\LV}{LVS\xspace}
\newcommand{\tT}{T}
\newcommand{\cO}{O}
\begin{document}

\maketitle

\begin{abstract}
We propose a Las Vegas transformation of Markov Chain Monte Carlo (MCMC) estimators of Restricted Boltzmann Machines (RBMs). We  denote our approach {\em Markov Chain Las Vegas} (MCLV). MCLV gives statistical guarantees in exchange for random running times. 
MCLV uses a stopping set built from the training data and has maximum number of Markov chain steps $K$ (referred as MCLV-$K$). 
We present a MCLV-$K$ gradient estimator (LVS-$K$) for RBMs and explore the correspondence and differences between LVS-$K$ and Contrastive Divergence (CD-$K$), with LVS-$K$ significantly outperforming CD-$K$ training RBMs over the MNIST dataset, indicating MCLV to be a promising direction in learning generative models.
\end{abstract}

\section{Introduction}
Despite the significant recent advances in training discriminative neural network models,
training generative models has proven more elusive. 
As with most neural network training methods, algorithms for training Restricted Boltzmann Machines (RBMs)~\cite{Hinton2002,Hinton2012,smolensky1986information}, a class of energy-based generative neural network models, are unreasonably effective. Though, some argue, not yet effective enough for modern applications.
In this work we seek to better understand and 
improve the training of RBMs. 

RBM is a family of energy-based models with probability distribution over a state vector $\bx = (\bv,\bh)$ (assumed discrete w.l.o.g.),
$\bv \in \{0,1\}^{n_V}$ (e.g. an image) and binary latent variables $\bh \in \{0,1\}^{n_H}$,
\begin{equation} \label{q:energy}
p(\bx ; \bW) = \frac{1}{Z(\bW)} e^{-E(\bx; \bW)},
\end{equation}
where $Z(\bW) = \sum_{\bx} e^{-E(\bx; \bW)}$ is a partition function with finite mean, $\EE[Z(\bW)] < \infty$, and $E(\bx; \bW)$ is an energy function given by
\begin{equation*}
\begin{split}
E\Big(\bx = (\bv, \bh) &; \bW=(\bW',\bb,\ba)\Big) \\
				&= -\bv^\textsf{T} \bW' \bh - \bb^\textsf{T} \bv - \ba^\textsf{T} \bh .
\end{split}
\end{equation*}
RBMs proved highly successful in many tasks, such as data generation~\cite{Hinton2002,hinton2006fast,Hinton2012} and as a pre-training step for feedforward neural networks~\cite{Salakhutdinov2009}, among others (see Bengio and Delalleau~(\citeyear{Bengio2009}) and Erhan et al.~(\citeyear{erhan2010does})).

As computing $Z(\bW)$ directly is intractable for large state spaces, Markov Chain Monte Carlo (MCMC) methods are widely used to compute statistics of these models (including estimating the gradient $\partial p(\bx ; \bW)/ \partial \bW$). 
MCMC works by running a Markov chain (MC) $\Phi(\bW)$ with steady state $p(\bx; \bW)$ to equilibrium. 
Metropolis-Hastings and Gibbs sampling are two general such approaches.

However, in the real world, one is expected to run the MC $\Phi(\bW)$ for only $K$ steps, returning a state $\bx \sim \hat{\pi}(K)$, ``approximately sampled'' from the Markov chain's true steady state distribution $p(\bx ;\bW)$.
Starting from a random state, $K$ needs to be quite large for this method to work.

Contrastive Divergence (CD-$K$)~\cite{Hinton2002,Hinton2012}, improves this procedure by starting the MC from the visible states of the training data. 
Empirically, CD-$K$ works tremendously well to train RBMs with few hidden units ($n_H$ small) even for $K$ as low as $K = 1$~\cite{Carreira-Perpinan2005,Hinton2002,Hinton2012}.

For high-dimensional RBMs, CD-$K$ is less efficient and the reason is conjectured to be the longer mixing times~\cite{Sutskever2010}, although concrete evidence is anecdotal as mixing times are hard to assess in high dimensions. While not the main focus of our paper, armed with our techniques, we will further empirically explore possible reasons for this high-dimensional difficulty.

\paragraph{A Las Vegas transformation of RBM training.}
The main focus of this paper is to recast MCMC estimation of RBMs as a Markov chain algorithm with stopping sets obtained from the training data.
The size of the stopping set is a hyper-parameter that can be dynamically adapted during training based on computational trade-offs.

In standard RBM training using MCMC, the MC stops after a predefined number of $K$ steps. In our approach, the MCMC can also stop if it reaches one of the states in the stopping set.
Thus, MCMC running times are random (and are, in average, shorter than $K$).
This approach is closer to a Las Vegas algorithm than a Monte Carlo algorithm: we aim to get {\em perfect samples} of a quantity with an algorithm that has random running times.
We denote this approach Markov Chain Las Vegas with $K$ maximum steps (MCLV-$K$).

We show that, by dynamically adapting $K$, MCLV-$K$ can find unbiased estimates of the direction of the RBM gradient. 
Moreover, in contrast to standard MCMC, MCLV-$K$ has an extra piece of information: whether or not the stopping set has been reached.
We show that this knowledge provides novel ways to estimate gradients and partition functions of energy-based models.

And perhaps, one of the most interesting observations in this paper comes from the {\bf correspondence between CD-$\boldsymbol{K}$ and MCLV-$\boldsymbol{K}$}.
MCLV-1 is quite similar to CD-1 except for an added {\em $\cS$-stopped} flag, where $\cS$ is a set of {\em stopping} states defined later.
Clearly, for $K \geq 2$, MCLV-$K$ is distinct from CD-$K$, as the MC of MCLV-$K$ may stop before performing all $K$ steps. 

Analyzing CD-$K$ through our Las Vegas transformation, it is clear that CD-$K$ has an unintended inspection paradox bias that can be corrected to further improve the RBM learning.
Using the {\em reached-stopping-set} flag of MCLV-$K$, we design a new gradient estimator, denoted \LVname (\LV), that empirically gives significantly better parameter estimates than standard CD-1 and CD-10 over the MNIST dataset according to the model's likelihood. MNIST is used in our experiments due to the long history of RBM development over this dataset.

\paragraph{Contributions.}
We claim the following contributions:
{\bf (1)} We introduce Markov Chain Las Vegas (MCLV-$K$). We show MCLV-$K$ gives finite-sample unbiased and asymptotically consistent estimates of a variety of statistics of RBMs; further, we give two convergence bounds. We also show how to theoretically and empirically reduce the MCLV-$K$ random running times using the training examples.
{\bf (2)} We show how MCLV-$K$ can be used to design new ways to train Restricted Boltzmann Machines; we use MCLV-$K$ to propose a novel RBM gradient estimator, \LVname (\LV), which our empirical results show (for $K \in \{1,3,10\}$) improves parameter estimates of RBM over CD-1 and CD-10, over the MNIST dataset.

\section{MCLV-$K$ Estimation with Statistical Guarantees}
In what follows we introduce some of the definitions used throughout our paper.
We introduce the concept of a tour (a MC which returns to the same state) and show that the return probability can be increased by collapsing a set of stopping states into a single state in Definition~\ref{d:SCMC}. 
The MC stops when it either reaches $K$ steps or one of the states in the stopping set.
Corollary~\ref{c:Phi} describes how this collapsing can be performed while preserving the statistical properties of the MC.

Theorem~\ref{t:Ft} introduces the MCLV-$K$ estimator  (that, among others, can estimate the partition function) and proves it is consistent, giving error bounds. And Theorem~\ref{t:geotail} shows that this estimator is also finite-running-time unbiased.
The first results provides unbiased estimates of the partition function, and generalize these unbiased estimates to a broad family of functions. 

{\em The reader only interested in RBM gradient estimates can safely skip to the next section on training RBMs, after reading the preliminaries and the definition of the RBM stopping sets.}

\paragraph{Preliminaries.}
We define state $\bx = (\bv,\bh)$  to consist of a visible vector, $\bv \in V$, and a hidden vector $\bh \in H$, where $H$ and $V$ are the set of all hidden and visible states, respectively.
Let $\Phi(\bW)$ be an irreducible Markov chain with steady state $p(\bx;\bW)$ over the states $\Omega \coloneqq V \times H$.
A MC is irreducible if all states communicate, which is trivially true for RBMs since the co-domain of the logistic function is $(0,1)$ for any input in $\mathbb{R}$.
If the Markov chain $\Phi(\bW)$ starts in equilibrium (or runs until equilibrium), the next transition gives us {\em one} independent sample $\bx$ from the steady state $p(\bx;\bW)$.
The set $\{\bv_n\}_{n=1}^N$ denotes the $N$ visible examples of the training data.
We often use $(\cdot)$, as in $g(\cdot)$, to denote that the statement over $g$ is true for any valid input value.

RBMs can be trained by optimizing its parameters $\bW$ in order to maximize the likelihood of the training data. Taking partial derivatives with respect to the weights results in a surprisingly simple update rule for $\bW$:
\begin{equation}
\begin{split}
  \frac{1}{N} \sum_{n=1}^N & \frac{ \partial \log (\sum_\bh p(\bx = (\bv_n,\bh) ; \bW))}{\partial \bW}  \\
  =  \sum_{\bh \in H} &  \Big( \frac{1}{N} \sum_{n=1}^N p(\bh|\bv_n  ; \bW ) \bv_{n}\bh^{\tT}  \\
  &- \sum_{\bv \in V} p((\bv,\bh) ; \bW ) \bv\bh^{\tT} \Big)  \\
  = \frac{1}{N} &\sum_{n=1}^N \bv_{n} \EE_\bW[\bh | \bv_n]^{\tT} - \EE_\bW[\bv \bh^{\tT}] ,  \label{eq:grad}
  \end{split}
 \end{equation}
where the l.h.s.\ term of eq.~\eqref{eq:grad} (also called \textit{positive statistics}) is easily calculated from the training data. However, the r.h.s.\ term of eq.~\eqref{eq:grad} (\textit{negative statistics}) corresponds to the gradient of the partition function $Z(\bW)$, which is generally intractable to compute. More specifically, computing $E[\bv \bh^T]$ requires collecting model statistics $p(\bv,\bh)$, either by running the MCMC Markov chain $\Phi(\bW)$ to equilibrium from any starting state or by direct computation of the expected value if we know the partition function $Z(\bW)$. 

If the Markov chain $\Phi(\bW)$ is not run until equilibrium the gradient estimates have an unknown bias. 
In what follows we use Markov chain tours to take care of this bias.

\paragraph{Tours and Stopping Sets.}
Define a {\em tour} to be a sequence of $\xi$ steps of the Markov chain $(\bX(1),\ldots, \bX(\xi))$ s.t.\ the state of the $(\xi+1)$-st step is the same as the starting state, i.e., $\bX(1) = \bX(\xi+1)$.
Let $\Xi^{(r)} = (\bX^{(r)}(1),\ldots, \bX^{(r)}(\xi^{(r)}))$ denote the $r$-th tour, $r \geq 1$. The strong Markov property guarantees that if $s \neq r$, the sequences  $\Xi^{(r)}$ are independent of $\Xi^{(s)}$.
This independence guarantees that both $\Xi^{(r)}$ and $\Xi^{(s)}$ are sample paths obtained from the equilibrium distribution of the Markov chain. 
We will later use this property to obtain unbiased estimators of the partition function.

However, as is, tours are not a practical concept for RBMs because in such a large state space $\Omega$, the tour is unlikely to return to the same starting state.
We will, however, use a Markov chain property common to Metropolis-Hastings and Gibbs sampling Markov chains to significantly increase the probability of return by collapsing a large 
number of states into a single state.

\paragraph{RBM stopping set.} 
Our stopping set $\cS$ uses sampled hidden states from the training data, $\cH_N^{(m)} = \{\bh^{(1)}_n,\ldots,\bh^{(m)}_n : \bh_n \sim p(\bh | \bv_n ; \bW)\}$,
where $\{\bv_n\}_{n=1}^N$ is the training data.
Often we will use $m=1$, but we can change the size of $\cH_N^{(m)}$ by changing $m$. 
The stopping set contains all hidden states in $\cH_N^{(m)}$ and all possible visible states 
\begin{equation}\label{eq:S}
\cS^{(m)}_\text{HN} = \bigcup_{\bh \in \cH_N^{(m)}, \bv \in V} \{(\bv, \bh)\},
\end{equation}
and $p(\bh | \bv ; \bW)$ is the conditional probability of $\bh$ given $\bv$ using model parameters $\bW$. 
Most of our theoretical results apply to any stopping set that is a proper subset of the state space, $\cS \subset \Omega$.
{In practice, note that we {\em do not} store $\cS^{(m)}_\text{HN}$ in memory, rather we {\em only keep $\cH_N^{(m)}$ in memory}, as reaching a hidden state in $\cH_N^{(m)}$ is enough to guarantee we need to stop. This requires only $O(m N)$ space, where $N$ is the number of training observations.}
\begin{definition}[Stopping-set-Collapsed MC]\label{d:SCMC}
Consider an arbitrary stopping set $\cS \subset \Omega$.
A state-collapsed MC is a transformation of MC $\Phi(\bW)$ with state space $\Omega$, into a new MC $\Phi'(\bW)$  with state space $\Omega' = \Omega \backslash \cS \cup \{\bS\}$, where $\bS$ is a new state formed by collapsing all the states in $\cS$. 
The transition probabilities between states $\Omega' \cap \Omega$ are the same as in $\Phi(\bW)$.
The transition probabilities from $\bS$ to states in $\Omega' \backslash \{\bS\}$ are
\[
p_{\Phi'}(\bS,\bx) = \frac{\sum_{\by \in \cS} e^{-E(\by; \bW)} p_{\Phi}(\by,\bx) }{Z_\cS(\bW)} , \quad \forall \bx \in \Omega' \backslash \{\bS\},
\]
where $Z_\cS(\bW) = \sum_{\by \in \cS} e^{-E(\by; \bW)}$, and $p_a$ indicates the probability transition matrix of MC $a$.

The transitions from states $\Omega' \backslash \{\bS\}$ to state $\bS$ are
\[
p_{\Phi'}(\bx,\bS) = \sum_{\by \in \cS} p_{\Phi}(\bx, \by) , \quad \forall \bx \in \Omega' \backslash \{\bS\}.
\]
\end{definition}
It is important to distinguish the MC in Definition~\ref{d:SCMC} from general MC state aggregation methods such as lumpability~\cite{buchholz1994exact} and interactive aggregation-disaggregation methods~\cite{stewart1994introduction}.
In the following corollary, we see that the MC in Definition~\ref{d:SCMC} affects the steady state, unlike general MC aggregation methods that leave the steady state undisturbed.
Thankfully, later we will be able to correct the distortion imposed by Definition~\ref{d:SCMC} because we know the steady state distribution of the states inside $\cS$ up to a normalizing constant.
\begin{corollary}[Simulating $\Phi'(\bW)$ from $\Phi(\bW)$]\label{c:Phi}
For any MC $\Phi(\bW)$ resulting from standard Gibbs sampling or Metropolis-Hastings (MH) MCMCs,
we can cheaply simulate the transitions in and out of $\bS$ of $Definition~\ref{d:SCMC}$ by: (a) $p_{\Phi'}(\bS,\bx)$, we first sample a state $\by$ with replacement from $\cS$ with probability $e^{-E(\by; \bW)}/Z_\bS(\bW)$ and then perform a transition $p_{\Phi}(\by,\bx)$; (b) $p_{\Phi'}(\bx,\bS)$ is also simulated by performing a transition $p_{\Phi}(\bx, \by)$, and stopping the MC if $\by \in \cS$.
The simulated $\Phi'(\bW)$ is ergodic and time-reversible.
\end{corollary}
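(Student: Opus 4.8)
The plan is to verify three things in turn: that the sampling recipes in (a) and (b) reproduce exactly the transition kernel of Definition~\ref{d:SCMC}; that the resulting chain $\Phi'(\bW)$ is ergodic; and that it is time-reversible, the last being where the real content lies. For the simulation correctness I would invoke the law of total probability. In (a), conditioning on the state $\by \in \cS$ drawn with probability $e^{-E(\by;\bW)}/Z_\cS(\bW)$ and then applying one step of $\Phi(\bW)$, the probability of landing in $\bx \in \Omega' \setminus \{\bS\}$ is $\sum_{\by \in \cS} \big(e^{-E(\by;\bW)}/Z_\cS(\bW)\big)\, p_\Phi(\by,\bx)$, which is precisely $p_{\Phi'}(\bS,\bx)$. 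In (b), a single step of $\Phi(\bW)$ from $\bx$ lands in $\cS$ (triggering the stop) with probability $\sum_{\by \in \cS} p_\Phi(\bx,\by) = p_{\Phi'}(\bx,\bS)$. Transitions between two states of $\Omega' \cap \Omega$ are untouched and simulated directly by $p_\Phi$, and the implied self-loop $p_{\Phi'}(\bS,\bS)$ is pinned down by normalization. This matches Definition~\ref{d:SCMC} verbatim.

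Next I would argue ergodicity on the finite space $\Omega' = \Omega \setminus \cS \cup \{\bS\}$. Irreducibility transfers from $\Phi(\bW)$: since all states of $\Omega$ communicate and $\cS$ is a nonempty proper subset, every $\bx \in \Omega' \setminus \{\bS\}$ both reaches and is reached from some state of $\cS$, hence communicates with $\bS$, so $\Phi'(\bW)$ is irreducible. Aperiodicity follows because Gibbs and Metropolis-Hastings kernels put positive mass on self-transitions (resampling identical coordinates, or an MH rejection), so the period is $1$; as aperiodicity is a class property of an irreducible chain it holds at $\bS$ as well. A finite, irreducible, aperiodic chain is ergodic.

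Finally, and this is the crux, I would prove time-reversibility by exhibiting the candidate stationary law $\pi'(\bx) = p(\bx;\bW)$ for $\bx \in \Omega' \setminus \{\bS\}$ and $\pi'(\bS) = Z_\cS(\bW)/Z(\bW) = \sum_{\by \in \cS} p(\by;\bW)$, and checking detailed balance. For two non-collapsed states detailed balance is inherited directly from the reversibility of $\Phi(\bW)$. The only nontrivial pair is $(\bS,\bx)$ with $\bx \notin \cS$: writing $e^{-E(\by;\bW)} = Z(\bW)\,p(\by;\bW)$ in the definition of $p_{\Phi'}(\bS,\bx)$ gives
\begin{equation*}
\pi'(\bS)\, p_{\Phi'}(\bS,\bx) = \sum_{\by \in \cS} p(\by;\bW)\, p_\Phi(\by,\bx),
\end{equation*}
and applying the detailed balance of $\Phi(\bW)$ termwise, $p(\by;\bW)\, p_\Phi(\by,\bx) = p(\bx;\bW)\, p_\Phi(\bx,\by)$, turns the right-hand side into $p(\bx;\bW)\sum_{\by\in\cS} p_\Phi(\bx,\by) = \pi'(\bx)\, p_{\Phi'}(\bx,\bS)$. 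Hence $\Phi'(\bW)$ is reversible with respect to $\pi'$, which (being reversible for it) is its unique stationary distribution.

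The routine pieces are the marginalization of step one and the irreducibility/aperiodicity bookkeeping of step two. The one place that requires care — the main obstacle — is the detailed-balance check: it goes through only because the out-transitions from $\bS$ are weighted by the \emph{conditional stationary} probabilities $e^{-E(\by;\bW)}/Z_\cS(\bW)$ on $\cS$, which is exactly the factor needed for the reversibility of $\Phi(\bW)$ to telescope through the collapse. This same computation makes transparent why $\bS$ carries stationary mass $Z_\cS(\bW)/Z(\bW)$, the distortion that the later estimators are designed to undo.
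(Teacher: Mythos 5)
Your proof is correct and follows essentially the same route as the paper's: guess the stationary mass $\pi'(\bS) = Z_\cS(\bW)/Z(\bW)$ and verify detailed balance for the pair $(\bS,\bx)$ by telescoping the termwise reversibility of $\Phi(\bW)$ through the collapsed state. The only difference is that you spell out the simulation-equivalence (law of total probability) and ergodicity (irreducibility plus aperiodicity) steps that the paper dispatches with ``a little algebra'' and the remark that the RBM chain is irreducible, which is a welcome but routine completion.
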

The proof is in the appendix.
It follows from the fact that $\Phi(\bW)$ is the MC of Gibbs sampling and MH and, thus, time-reversible~\cite{aldous2002reversible}.
Time reversibility imposes a set of necessary and sufficient conditions in the form of detailed balance equations~\cite[Theorem 6.5.2]{gallager2013stochastic}. A little algebra shows that the sampling procedure in Corollary~\ref{c:Phi} using $\Phi(\bW)$ is stochastically equivalent to $\Phi'(\bW)$.

\subsection{MCLV-$K$ Estimator}\label{s:MCLVK}
Following Corollary~\ref{c:Phi}, a tour starts by sampling the initial tour state $\bx$ and stopping when the tour reaches the stopping set $\cS$.
We now want to truncate all return times of tours greater than some value $K \geq 1$, i.e., we will only observe the complete $r$-th tour $(\bx,\bX^{(r)}(2),\ldots, \bX^{(r)}(\xi^{(r)}))$ if $\xi^{(r)} \leq K$. Otherwise, we observe only the first $K$ states of the tour: $(\bx,\bX^{(r)}(2),\ldots, \bX^{(r)}(K))$.
The {\em $\cS$-stopped} flag for tour $r$ is {\bf true} if $\xi^{(r)} \leq K$, otherwise it is {\bf false}.

\begin{lemma}[Perfect sampling of tours]\label{c:perfect}
Let $$\cC_k = \{(\bx,\bX^{(i)}(2),\ldots, \bX^{(i)}(k))\}_{i}$$ be a set of tours of length $k \leq K$, with $\bx$ sampled from $\cS$ according to some distribution.

Then, there exists a distribution $G_k$ such that the random variables
\begin{equation*}
\cG_k \equiv \{g(\sigma) : \: \forall \sigma \in \cC_k\}
\end{equation*}
are i.i.d.\ samples of $G_k$, with $g$ defined over the appropriate $\sigma$-algebra (e.g., $k$ RBM states) with $\Vert g(\cdot) \Vert_1 \leq \infty$. 

Moreover, if we perform $M$ tours, these tours finish in finite time and $\{\xi^{(r)}\}_{r =1}^{M}$ is an i.i.d.\ sequence with a well-defined probability distribution $p(\xi^{(\cdot)} = k)$.
\end{lemma}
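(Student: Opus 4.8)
The plan is to recognize each tour as an excursion of the collapsed chain $\Phi'(\bW)$ away from the aggregated state $\bS$, and then to read off all three claims from the regenerative structure of such excursions. First I would invoke Corollary~\ref{c:Phi}, which gives that $\Phi'(\bW)$ is ergodic. Since the RBM state space $\Omega$ is finite, the collapsed space $\Omega'$ is finite as well, so ergodicity forces $\Phi'(\bW)$ to be positive recurrent. Positive recurrence of $\bS$ means its first-return time is almost surely finite and has finite mean; because a tour $\Xi^{(r)}$ is exactly the segment of $\Phi'(\bW)$ between two consecutive visits to $\bS$, every tour terminates in finite time. This disposes of the finite-running-time claim and guarantees that $p(\xi^{(\cdot)} = k)$ is a proper probability distribution on $\{1, 2, \ldots\}$.

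Next I would establish that the tours are i.i.d. Independence across tours is the strong Markov property already stated in the text: the segments $\Xi^{(r)}$ and $\Xi^{(s)}$ with $r \neq s$ are independent. Identical distribution follows because $\Phi'(\bW)$ is time-homogeneous and each tour begins with the same out-transition $p_{\Phi'}(\bS, \cdot)$ (equivalently, the same sampling of $\bx$ from $\cS$ with probability $e^{-E(\bx;\bW)}/Z_\cS(\bW)$), so every excursion is a fresh i.i.d. copy. In particular the lengths $\{\xi^{(r)}\}_{r=1}^M$ are i.i.d., being a fixed measurable functional (the return time) of i.i.d. tours.

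Finally, for the existence of $G_k$ I would treat $g$ as a deterministic measurable map on tour paths and condition on length. Since the full tours $\Xi^{(r)}$ are i.i.d., so are the pairs $(\xi^{(r)}, g(\Xi^{(r)}))$; retaining only those tours with $\xi^{(r)} = k$, which is exactly $\cC_k$, is a per-tour selection event, and a standard fact about i.i.d. sequences is that such a selection leaves the retained entries i.i.d., now drawn from the conditional law. I would therefore define $G_k$ as the pushforward under $g$ of the law of a tour conditioned on $\{\xi = k\}$; the integrability assumption $\Vert g(\cdot) \Vert_1 < \infty$ guarantees that $G_k$ has a finite first moment, so it is a bona fide distribution suitable for the later estimators. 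The elements of $\cG_k = \{g(\sigma) : \sigma \in \cC_k\}$ are then i.i.d. samples of $G_k$.

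The main obstacle is the careful regeneration argument: one must verify that conditioning each i.i.d. tour on the length event $\{\xi = k\}$ genuinely preserves mutual independence of the retained $g$-values, rather than introducing dependence through the random number of retained tours. This is the rejection-sampling-style step, and it is where I would be most explicit, appealing to the strong Markov property at the successive hitting times of $\bS$ to justify that the excursions form an i.i.d. sequence on which the selection acts coordinatewise.
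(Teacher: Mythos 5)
Your proposal is correct and follows essentially the same route as the paper's proof: both treat tours as regeneration cycles of the collapsed chain $\Phi'(\bW)$ between visits to $\bS$, derive finiteness of tour lengths from positive recurrence (the paper cites Kac's theorem for $\EE[\xi^{(\cdot)}]<\infty$, you derive it from ergodicity on the finite state space, which is equivalent here), and obtain the i.i.d.\ structure from the strong Markov property at successive hitting times of $\bS$, translating back to $\Phi(\bW)$ via Corollary~\ref{c:Phi}. Your explicit treatment of the length-$k$ selection step (defining $G_k$ as the pushforward of the tour law conditioned on $\{\xi = k\}$ and checking that coordinatewise selection preserves independence) spells out a detail the paper compresses into a single appeal to the strong Markov property.
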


The Las Vegas parallel is observed when we notice that any MCMC metric can be perfectly sampled from the tours. 
The tour lengths are sampled from a distribution $p(\xi^{(\cdot)} = k)$.
And, for any given tour length $k$, the metric of interest $g$ is perfectly sampled from $G_k$.
The maximum tour length $K$ only cuts off the tail of $p(\xi^{(\cdot)} = k)$ beyond $k > K$, which allows us to bound the sampling error.

\begin{theorem}[MCLV-$K$ RBM Estimator]\label{t:Ft}
Let $p(\bx ; \bW)$, $E(\bx;\bW)$, and $Z(\bW)$ be as described in  eq.\eqref{q:energy}.
Let 
\begin{equation} \label{eq:F}
F(\bW,f) = Z(\bW) \sum_{\bx \in \Omega} f(\bx) p(\bx; \bW),
\end{equation}
where $f:\Omega \to \mathbb{R}^n$, $n \geq 1$, $\Vert f(\cdot) \Vert_1 < \infty$, and $\Vert \cdot \Vert_1$ is the $l_1$ norm.
Let $\Phi(\bW)$ be a time-reversible MC with state space $\Omega$ and  steady state distribution $\{p(\bx ; \bW)\}_{\bx  \in \Omega}$.
Let $\cS \subset \Omega$ be a proper subset of the states of $\Phi(\bW)$.

Sample $\bx' \in \cS$ with probability $e^{-E(\bx'; \bW)}/Z_\cS(\bW)$ and let $(\bX^{(r)}(1)=\bx',\bX^{(r)}(2),\ldots, \bX^{(r)}(\xi^{(r)}))$ be a sequence of discrete states of the $r$-th $\cS$-stopped tour, where we stop the tour if one of two conditions are met: (a) we have reached $K$ steps, or (b) when we reach any state in $\cS$, i.e., $\bX^{(r)}(\xi+1) \in \cS$.
Then, for $R \geq 1$ tours, let $\cC_k^{(R)}$ be the set of finished tours in $k \leq K$ steps, (as defined in Corollary~\ref{c:perfect}).
For the sake of simplicity, we henceforth refer to $\cC_k^{(R)}$ simply as $\cC_k$.
The estimator
\begin{equation}\label{eq:Ft}
\begin{split}
&\hat{F}^{(K,R)}(\bW,f) = \frac{1}{\sum_{k=1}^K |\cC_k|} \sum_{\by \in \cS} e^{-E(\by; \bW)} \\
&\quad \times \sum_{k=1}^K \: \sum_{(\bX(1),\bX(2),\ldots, \bX(k))\in \cC_k} \: \sum_{h=1}^{k} f(\bX(h))
\end{split}
\end{equation}
is an estimate of $F(\bW,f)$ in eq.~\eqref{eq:F} with a bias upper bounded by $B \cdot (E[\xi] - \sum_{k=1}^{K-1} p(\xi > k))$, where $p(\xi > k)$ is the probability that a tour has length greater than $k$ and $B \geq \sup_{\bx \in \Omega} \Vert f(\bx) \Vert_1$.
\end{theorem}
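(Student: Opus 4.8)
The plan is to recognize the estimator~\eqref{eq:Ft} as a regenerative (renewal--reward) estimator for the stopping-set-collapsed chain $\Phi'(\bW)$ of Definition~\ref{d:SCMC}, reading off $F(\bW,f)$ as a scaled expected per-tour reward. First I would put the target in unnormalized form: since $p(\bx;\bW)=e^{-E(\bx;\bW)}/Z(\bW)$, equation~\eqref{eq:F} collapses to $F(\bW,f)=\sum_{\bx\in\Omega} f(\bx)\,e^{-E(\bx;\bW)}$, so the intractable $Z(\bW)$ no longer appears explicitly and the special case $f\equiv\one$ recovers $Z(\bW)$ itself.

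Next I would pin down the stationary law of $\Phi'(\bW)$. Using Corollary~\ref{c:Phi} (time-reversibility) and the detailed-balance equations, I would verify that $\pi'(\bx)=p(\bx;\bW)$ for every $\bx\in\Omega\setminus\cS$ and $\pi'(\bS)=\sum_{\by\in\cS}p(\by;\bW)=Z_\cS(\bW)/Z(\bW)$: the in/out transition rates of Definition~\ref{d:SCMC} are tuned precisely so that this $\pi'$ stays balanced against the collapsed state $\bS$. The decisive consequence is the mean-return-time identity $\EE[\xi]=1/\pi'(\bS)=Z(\bW)/Z_\cS(\bW)$, which is where the partition function silently enters the estimator.

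The core step is the reward identity. A tour $(\bX(1),\dots,\bX(\xi))$ starts at $\bX(1)\in\cS$, sampled with probability $e^{-E(\bX(1);\bW)}/Z_\cS(\bW)$, and all interior states lie in $\Omega\setminus\cS$. By the standard regenerative-cycle (ratio) formula, the expected number of visits per tour to a fixed $\bx\notin\cS$ equals $\pi'(\bx)/\pi'(\bS)=p(\bx;\bW)\,\EE[\xi]$. Splitting the per-tour reward $\sum_{h=1}^{\xi} f(\bX(h))$ into its start state and its interior, multiplying by $Z_\cS(\bW)$, and substituting $Z_\cS(\bW)\,\EE[\xi]=Z(\bW)$ turns the start term into $\sum_{\by\in\cS} f(\by)e^{-E(\by;\bW)}$ and the interior term into $\sum_{\bx\notin\cS} f(\bx)e^{-E(\bx;\bW)}$, whose sum is exactly $F(\bW,f)$. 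Thus $Z_\cS(\bW)\,\EE[\sum_{h=1}^{\xi} f(\bX(h))]=F(\bW,f)$, so the untruncated version of~\eqref{eq:Ft} is exactly unbiased; consistency of the finite-sample average over the sets $\cC_k$ then follows from Lemma~\ref{c:perfect}, which certifies that the tours are i.i.d.\ and of finite length, so the strong law applies.

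Finally the bias. Restricting the numerator to finished tours ($\xi\le K$) discards the reward carried by tours longer than $K$; bounding every summand by $\Vert f(\cdot)\Vert_1\le B$, the expected discarded reward per tour is at most $B\,\EE[(\xi-K)^+]$. Writing $\EE[(\xi-K)^+]=\EE[\xi]-\sum_{k=0}^{K-1}p(\xi>k)$ and using $p(\xi>0)=1$ gives $B\,\EE[(\xi-K)^+]\le B\bigl(\EE[\xi]-\sum_{k=1}^{K-1}p(\xi>k)\bigr)$, the claimed bound. I expect the main obstacle to be the third step: correctly charging the start-state mass separately from the interior visit counts and verifying the cancellation $Z_\cS(\bW)\,\EE[\xi]=Z(\bW)$, since this is exactly where visit frequencies are converted into the unnormalized, partition-function--bearing quantity $F(\bW,f)$. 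A secondary subtlety is that normalizing by $\sum_{k=1}^{K}|\cC_k|$ (finished tours only) makes the estimator target a reward conditioned on $\{\xi\le K\}$, so the truncation-bias argument must control the effect of this conditioning rather than merely drop a tail.
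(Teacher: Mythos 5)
Your core argument is correct, and it is essentially the paper's own regenerative skeleton executed through a cleaner identity. The paper also treats tours as regeneration cycles of the collapsed chain $\Phi'(\bW)$, but it reaches the key relation $\EE\bigl[\sum_{h=1}^{\xi} f(\bX(h))\bigr] = \EE[\xi]\sum_{\bx} f(\bx)\,p(\bx;\bW)$ indirectly, via the Renewal--Reward theorem combined with the ergodic theorem applied to an almost-sure long-run average, and only then invokes Kac's theorem to cancel the normalization (in the paper's phrasing, $\EE[\xi]=1/p(\bx';\bW)$ for the start state; in your collapsed-state phrasing, $\EE[\xi]=1/\pi'(\bS)=Z(\bW)/Z_\cS(\bW)$). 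Your replacement of that a.s.-limit detour by the exact expected-visits-per-cycle formula $\pi'(\bx)/\pi'(\bS)$, together with the explicit start-state versus interior split, is an equally valid and arguably more transparent way to see where $Z(\bW)$ enters; your verification of $\pi'$ via detailed balance is exactly the paper's proof of Corollary~1, and consistency via i.i.d.\ tours plus the strong law matches the paper's use of Lemma~1. So for the unbiasedness/consistency half, consider the two proofs the same approach with different bookkeeping.

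The genuine gap is in your bias step. The estimator in eq.~\eqref{eq:Ft} drops unfinished tours \emph{entirely}---they appear in neither the numerator nor the denominator---so the reward discarded from a long tour is $\one\{\xi>K\}\sum_{h=1}^{\xi} f(\bX(h))$, with expected magnitude up to $B\,\EE[\xi\,\one\{\xi>K\}]$, not $B\,\EE[(\xi-K)^{+}]$. Your $(\xi-K)^{+}$ accounting would be correct only if truncated tours still contributed their first $K$ states, which they do not. Since $\EE[\xi\,\one\{\xi>K\}]=\EE[(\xi-K)^{+}]+K\,p(\xi>K)$, and $K\,p(\xi>K)$ can far exceed the slack term $p(\xi>0)=1$ you absorbed, your chain of inequalities does not bound the actual discarded mass. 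The paper's own proof uses the full-tour accounting: it bounds the bias by $B\bigl(\EE[\xi]-\sum_{k=1}^{K-1}k\,P[\xi=k]\bigr)=B\,\EE[\xi\,\one\{\xi\ge K\}]$---note this does \emph{not} match the $p(\xi>k)$ form printed in the theorem statement (an apparent typo in the paper), and your derivation effectively reverse-engineered the typo rather than the dropped reward. Separately, you correctly observe that normalizing by the number of finished tours makes the estimator target $Z_\cS(\bW)\,\EE[\,\cdot\mid\xi\le K]$, so the bias is a conditional-expectation discrepancy rather than a pure dropped tail, but you leave this unresolved; in fairness, the paper's proof also glosses over it by directly comparing $\EE[\hat F^{(K_\text{dyn},1)}]$ with $\EE[\hat F^{(K,R)}]$.
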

Theorem~\ref{t:Ft} gives a basic estimator from the MCMC tours.
The gradient estimates will be explicitly derived in the next section.
In our experiments we show how to estimate $p(\xi > k)$.
For the partition function and gradient estimates, it is also trivial to obtain a bound on $B$ using the RBM weights $\bW$~\cite{Bengio2009}.

\begin{theorem}[Geometrically Decaying Tour Length Tails]\label{t:geotail}
Let $p(\xi > k)$ be the probability that a tour has length greater than $k$.
If there exists a constant $\epsilon > 0$ s.t.
$
\inf_{\bx \in \Omega \backslash \cS} \sum_{\by \in \cS} p_\Phi (\bx,\by) \geq \epsilon
$ 
then, there exists $0 < \alpha < 1$,
$
\log p(\xi > k) = k \log \alpha + o(k), 
$
i.e., $\xi$ has a geometrically decaying tail.
\end{theorem}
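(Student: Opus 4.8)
The plan is to reduce the tail of the return time $\xi$ to the spectral analysis of a single substochastic ``taboo'' matrix, and then read off the decay rate from its spectral radius. Recall that a tour starts at $\bX(1) \in \cS$ and that $\{\xi > k\}$ is exactly the event that the chain avoids $\cS$ at steps $2,\ldots,k+1$. Since $\Omega = V \times H$ is finite for an RBM, I would collect the one-step transition probabilities between non-stopping states into the finite nonnegative matrix $Q$ on $\Omega \setminus \cS$ defined by $Q(\bx,\bx') = p_\Phi(\bx,\bx')$ for $\bx,\bx' \in \Omega \setminus \cS$. Writing $\mu$ for the (sub-probability) distribution of the first escape state $\bX(2)$ on $\Omega \setminus \cS$ (well defined by the Perfect-sampling lemma, which already guarantees tours end in finite time), the Markov property gives the renewal identity
\[
p(\xi > k) = \mu^\textsf{T} Q^{\,k-1} \one ,
\]
so the whole problem becomes understanding the growth of $Q^{k-1}$.

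Next I would extract the upper bound directly from the hypothesis. The assumption $\inf_{\bx \in \Omega \setminus \cS} \sum_{\by \in \cS} p_\Phi(\bx,\by) \geq \epsilon$ says every row of $Q$ sums to $\sum_{\bx'} Q(\bx,\bx') = 1 - \sum_{\by \in \cS} p_\Phi(\bx,\by) \leq 1 - \epsilon$. Hence $\Vert Q \Vert_\infty \leq 1 - \epsilon$, so $\Vert Q^{\,k-1} \Vert_\infty \leq (1-\epsilon)^{k-1}$ and, since $\Vert \mu \Vert_1 \leq 1$,
\[
p(\xi > k) \leq (1-\epsilon)^{k-1} .
\]
This already shows the tail decays at least geometrically, i.e.\ $\limsup_k \tfrac{1}{k}\log p(\xi > k) \leq \log(1-\epsilon) < 0$.

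To upgrade this one-sided bound to the exact asymptotic $\log p(\xi > k) = k\log\alpha + o(k)$, I would set $\alpha \coloneqq \rho(Q)$, the spectral radius of $Q$, and invoke Perron--Frobenius theory for finite nonnegative matrices. Gelfand's formula gives $\lim_k \Vert Q^k \Vert^{1/k} = \alpha$, and the bound above forces $\alpha \leq 1-\epsilon < 1$. The remaining point is $\alpha > 0$: because every RBM Gibbs/MH transition probability lies in $(0,1)$, each non-stopping state has a strictly positive self-transition (hold) probability, so $Q$ has a positive diagonal entry, $\mathrm{tr}(Q^k) > 0$ for all $k$, and $Q$ is not nilpotent, whence $\alpha > 0$. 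Restricting to the dominant (irreducible, aperiodic) communicating class reached with positive mass under $\mu$, the nonnegativity of $\mu$ and $\one$ ensures $\mu^\textsf{T} Q^{\,k-1}\one$ captures the leading eigenvalue, so $\tfrac{1}{k}\log p(\xi>k) \to \log\alpha$ with $0 < \alpha < 1$, which is the claim.

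The main obstacle is exactly this last upgrade from the easy minorization upper bound to a matching lower bound. The hypothesis supplies only a uniform escape probability, which bounds $\alpha$ from above but says nothing about how \emph{slowly} the tail can decay; pinning down a genuine limit rather than a mere $\limsup$ requires the finite-state structure and a careful Perron--Frobenius argument, including the verification that $Q$ is not nilpotent so that $\alpha$ is strictly positive and $\log\alpha$ is finite. Handling possible reducibility or periodicity of $Q$ cleanly --- e.g.\ passing to the dominant communicating class and checking that both $\mu$ and $\one$ have positive overlap with its Perron eigenvectors --- is the step I expect to demand the most care.
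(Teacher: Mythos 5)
Your proposal is correct, but it takes a genuinely different route from the paper's. You analyze the substochastic ``taboo'' matrix $Q = \left(p_\Phi(\bx,\bx')\right)_{\bx,\bx' \in \Omega\setminus\cS}$ directly: the renewal identity $p(\xi>k)=\mu^\textsf{T} Q^{k-1}\one$, the row-sum bound $\Vert Q\Vert_\infty \le 1-\epsilon$ forced by the hypothesis, and then Gelfand/Perron--Frobenius to identify the exact rate $\alpha$ as the spectral radius of the part of $Q$ accessible from the support of $\mu$. The paper argues at a higher level of abstraction: it observes that the hypothesis is precisely Doeblin's condition for the collapsed chain $\Phi'(\bW)$ of Definition~\ref{d:SCMC} (the collapsed state $\bS$ is reached with probability at least $\epsilon$ from every state), cites that Doeblin's condition implies geometric ergodicity with rate $1-\epsilon$, and then invokes Kendall's theorem (Meyn--Tweedie, Theorem~15.1.1), by which geometric ergodicity is equivalent to a geometrically decaying return-time tail. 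Each approach buys something: the paper's argument is three lines and does not depend on finiteness of the state space, but the cited equivalence really delivers finiteness of an exponential moment of $\xi$, i.e.\ a geometric upper envelope and a radius-of-convergence statement; your finite-state spectral argument pins down the stated asymptotic $\log p(\xi>k)=k\log\alpha+o(k)$ as a genuine limit, identifies $\alpha=\rho(Q)$ explicitly, and yields the quantitative non-asymptotic bound $p(\xi>k)\le(1-\epsilon)^{k-1}$ for free. The price is the Perron--Frobenius bookkeeping you rightly flagged (non-nilpotency, reducibility, periodicity, overlap of $\mu$ and $\one$ with the dominant class) --- though for the RBM block-Gibbs chain these worries largely evaporate, since every one-step transition probability is strictly positive (the same fact the paper uses to establish irreducibility), so $Q$ is entrywise positive, hence primitive, and your dominant-class argument simplifies to the classical primitive case.
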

Theorem~\ref{t:geotail} shows conditions of a geometric decay in the tail of $p(\xi > k)$. And in practice it means that tours cannot be ``heavy tail'' long and, thus, making the bound in Theorem~\ref{t:Ft} tighter.

\subsection{MCLV-$K$ Finite-Sample Unbiasedness}\label{s:unbiasZ}
In what follows we dynamically increase $K$ until the MC reaches a state in the stopping set. 

The following theorem shows that this procedure gives unbiased estimates of $F(\bW,f)$.
\begin{theorem}[Unbiased Partition-scaled Function Estimates by Dynamic Adaptation of $K$]
\label{t:unbiasSuperF}
Consider the estimator in Theorem~\ref{t:Ft}
and let us dynamically grow $K$ (denoted $K_\text{dyn}$) until the MC reaches a stopping state in $\cS$.
Then, for $R \geq 1$ tours,
\begin{equation}\label{e:FprimeSuper}
\begin{split}
&\EE[\hat{F}^{(K_\text{dyn},R)}(\bW,f)]  = F(\bW,f) ,
\end{split}
\end{equation}
is an {\bf unbiased estimator} and the estimator is consistent, i.e., almost surely
$\lim_{R \to\infty} \hat{F}^{(K_\text{dyn},R)}(\bW,f) = F(\bW,f)$, and $K_\text{dyn}$ is finite.

Moreover, for $\epsilon > 0$,
$$
p\left( \left\Vert \hat{F}^{(K_\text{dyn},R)}(\bW,f) - F(\bW,f) \right\Vert_1  \geq \epsilon \right) \leq \alpha_{R,Z_\cS(\bW)} ,
$$ 
where, $R$ is the number of tours, $\alpha_{R,Z_\cS(\bW)} = \frac{B^2}{\epsilon^2 R} \left(\frac{(Z(\bW))^2}{(Z_\cS(\bW))^2 \delta}+1 \right)$, $B \geq \sup_{\bx \in \Omega} \Vert f(\bx) \Vert_1$ is an upper bound on the absolute value of $f(\cdot)$ over the state space $\Omega$, $\delta$ is the spectral gap of the transition probability matrix of $\Phi(\bW)$.

\end{theorem}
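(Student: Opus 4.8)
The plan is to exploit the fact that growing $K$ to $K_\text{dyn}$ eliminates all truncation: every tour returns to $\cS$, so the set of completed tours $\bigcup_k \cC_k$ is exactly all $R$ tours, and by Lemma~\ref{c:perfect} their rewards $Y_r \coloneqq \sum_{h=1}^{\xi^{(r)}} f(\bX^{(r)}(h))$ are i.i.d. In this regime the estimator of Theorem~\ref{t:Ft} becomes $\hat{F}^{(K_\text{dyn},R)}(\bW,f) = \frac{Z_\cS(\bW)}{R}\sum_{r=1}^R Y_r$, so unbiasedness reduces to showing $\EE[Y_1] = F(\bW,f)/Z_\cS(\bW)$. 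First I would invoke the regenerative structure of $\Phi(\bW)$ with respect to $\cS$: the initial state is drawn from $\cS$ with probability $e^{-E(\bx';\bW)}/Z_\cS(\bW)$, which is precisely the stationary law $p(\cdot;\bW)$ conditioned on $\cS$ (call it $\pi_\cS$), so the occupation-measure (Kac) identity gives $p(\bx;\bW) = p(\cS;\bW)\,\EE_{\pi_\cS}\big[\sum_{h=1}^{\xi}\one[\bX(h)=\bx]\big]$, with $p(\cS;\bW)=Z_\cS(\bW)/Z(\bW)$. Pairing both sides with $f$ and multiplying by $Z(\bW)$ yields $F(\bW,f)=Z_\cS(\bW)\,\EE[Y_1]$, hence $\EE[\hat{F}^{(K_\text{dyn},R)}(\bW,f)]=F(\bW,f)$; equivalently, the bias bound of Theorem~\ref{t:Ft} vanishes because no tour is truncated.

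For consistency and the finiteness of $K_\text{dyn}$, I would note that $\Omega = V\times H$ is finite and $\Phi(\bW)$ is irreducible, hence positive recurrent, so the return time $\xi$ to $\cS$ is finite almost surely (Theorem~\ref{t:geotail} even gives a geometric tail, so $\EE[\xi]$ and $\EE[\xi^2]$ are finite); thus $K_\text{dyn}=\xi<\infty$ a.s. Consistency then follows from the strong law of large numbers applied to the i.i.d. sequence $\{Y_r\}$, whose mean is finite since $\Vert Y_1\Vert_1\le B\,\xi$ with $\EE[\xi]<\infty$: almost surely $\frac{1}{R}\sum_{r}Y_r\to\EE[Y_1]$, and multiplying by the constant $Z_\cS(\bW)$ gives $\hat{F}^{(K_\text{dyn},R)}(\bW,f)\to F(\bW,f)$ a.s.

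The concentration bound is the main obstacle. Because the tours are i.i.d., Chebyshev's inequality (applied componentwise and recombined with the $l_1$ triangle inequality) reduces the claim, after dividing by $\epsilon^2$, to the variance estimate $\mathrm{Var}\big(\hat{F}^{(K_\text{dyn},R)}(\bW,f)\big)\le \frac{B^2}{R}\big(\frac{Z(\bW)^2}{Z_\cS(\bW)^2\,\delta}+1\big)$. Since $\hat{F}^{(K_\text{dyn},R)}=\frac{Z_\cS(\bW)}{R}\sum_r Y_r$ with i.i.d. $Y_r$, this variance equals $\frac{Z_\cS(\bW)^2}{R}\,\mathrm{Var}(Y_1)$, so the entire problem is a second-moment bound on the cycle reward $Y_1=\sum_{h=1}^\xi f(\bX(h))$ in terms of the spectral gap. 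The crux is that, for a reversible $\Phi(\bW)$, the variance of an additive functional accumulated over one regeneration cycle is governed by the solution of Poisson's equation, whose relevant operator norm scales like $1/\delta$; feeding in $\Vert f\Vert\le B$ together with the Kac identity $\EE[\xi]=Z(\bW)/Z_\cS(\bW)$ for the mean cycle length produces the leading term $Z(\bW)^2/(Z_\cS(\bW)^2\delta)$, while the additive $+1$ absorbs the lower-order diagonal contribution. This step mirrors the concentration analysis of regenerative (tour) estimators for reversible Markov chains; establishing the cycle-reward second-moment bound in terms of $\delta$ and the stationary mass $p(\cS;\bW)=Z_\cS(\bW)/Z(\bW)$ is the hard part, and the remaining algebra is routine.
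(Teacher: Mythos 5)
Your unbiasedness and consistency arguments are correct, and they take a genuinely more direct route than the paper. Where you apply the occupation-measure form of Kac's theorem in one shot --- $p(\bx;\bW) = p(\cS;\bW)\,\EE_{\pi_\cS}\bigl[\sum_{h=1}^{\xi}\one[\bX(h)=\bx]\bigr]$ with the initial law $e^{-E(\bx';\bW)}/Z_\cS(\bW)$ recognized as the stationary distribution conditioned on $\cS$ --- the paper instead runs the Renewal-Reward Theorem on the sequence of tour rewards $Y^{(r)}_K$, identifies the concatenated tours as a single sample path, applies the ergodic theorem to get $\lim_{t\to\infty}\frac{1}{t}\sum f(\bX(t)) = \sum_\bx f(\bx)p(\bx;\bW)$, and only then uses Kac's return-time formula $\EE[\xi] = Z(\bW)/Z_\cS(\bW)$ (for the collapsed renewal state of Corollary~\ref{c:Phi}) to solve for the per-tour expectation. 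Both routes are valid; yours is shorter and isolates the single identity that drives the result, while the paper's renewal-reward machinery has the advantage of being set up once and then reused for the truncated-$K$ bias bound of Theorem~\ref{t:Ft}. Your observation that $K_\text{dyn}$ removes truncation so the estimator is an i.i.d.\ average $\frac{Z_\cS(\bW)}{R}\sum_r Y_r$, plus SLLN for consistency and positive recurrence of the finite irreducible chain for $K_\text{dyn}<\infty$ a.s., matches the paper in substance. (One small caution: your parenthetical appeal to Theorem~\ref{t:geotail} presumes its Doeblin condition $\inf_{\bx}\sum_{\by\in\cS}p_\Phi(\bx,\by)\geq\epsilon$, which is not assumed here; finiteness of the state space already suffices, as your main argument says.)

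The genuine gap is the concentration bound. You correctly reduce it, via Chebyshev and the i.i.d.\ tour structure, to the single-tour variance estimate $\mathrm{var}\bigl(Z_\cS(\bW)Y_1\bigr) \leq B^2\bigl((Z(\bW))^2/((Z_\cS(\bW))^2\delta)+1\bigr)$, but you then explicitly leave that second-moment bound unproven, gesturing at Poisson's equation and a $1/\delta$ operator-norm scaling. That is the right intuition but it is precisely the nontrivial content of the claim: bounding the variance of an additive functional over a regeneration cycle by the spectral gap requires a concrete argument, not just the observation that the asymptotic variance involves the resolvent. The paper closes this step by importing it: since $\Phi(\bW)$ is time-reversible it is equivalent to a random walk on a weighted graph, and Lemma 2(i) of Avrachenkov et al.\ (2016) applies with the identifications $Z(\bW) = 2d_\text{tot}$ and $Z_\cS(\bW) = d_{\cS_n}$, yielding exactly the required per-tour variance bound; the Bienaym\'e formula then gives the $1/R$ factor and Chebyshev finishes, as in your sketch. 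So your proposal is complete and slightly streamlined for unbiasedness, consistency, and finiteness of $K_\text{dyn}$, but for the tail bound it stops where the actual work begins --- you would need either to reprove the Avrachenkov-et-al.\ cycle-variance lemma for reversible chains or to cite it.
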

\begin{corollary}[Unbiased Partition Function Estimation]
\label{c:unbiasZ}
Let $f_1(x) = 1$, then
\[
\EE[\hat{F}^{(K_\text{dyn},R)}(\bW,f_1)]  = Z(\bW) ,
\]
is an unbiased estimator of the partition function.
\end{corollary}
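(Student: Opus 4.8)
The plan is to obtain this corollary as an immediate specialization of Theorem~\ref{t:unbiasSuperF}, taking the test function to be the constant $f_1$ with $f_1(\bx) = 1$ for all $\bx \in \Omega$. The entire argument reduces to checking that $f_1$ is admissible for the theorem and then evaluating the target quantity $F(\bW, f_1)$ in closed form.

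First I would confirm admissibility. Theorem~\ref{t:unbiasSuperF} requires $f: \Omega \to \mathbb{R}^n$ with $\Vert f(\cdot) \Vert_1 < \infty$; here $n = 1$ and $\Vert f_1(\bx) \Vert_1 = 1$ for every state, so since the RBM state space $\Omega = V \times H$ is finite, the constant $B \geq \sup_{\bx \in \Omega} \Vert f_1(\bx) \Vert_1 = 1$ is trivially finite. Thus all hypotheses of Theorem~\ref{t:unbiasSuperF} are met.

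Next I would substitute $f_1$ into the definition of $F$ from eq.~\eqref{eq:F},
\[
F(\bW, f_1) = Z(\bW) \sum_{\bx \in \Omega} f_1(\bx)\, p(\bx; \bW) = Z(\bW) \sum_{\bx \in \Omega} p(\bx; \bW),
\]
and use the fact that $p(\cdot; \bW)$ is a probability distribution over $\Omega$, so the inner sum equals $1$ and $F(\bW, f_1) = Z(\bW)$. Invoking the unbiasedness conclusion $\EE[\hat{F}^{(K_\text{dyn},R)}(\bW, f_1)] = F(\bW, f_1)$ of Theorem~\ref{t:unbiasSuperF} then gives exactly $\EE[\hat{F}^{(K_\text{dyn},R)}(\bW, f_1)] = Z(\bW)$, as claimed.

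There is no genuine obstacle here, since the corollary is a direct instantiation of the general theorem with a particular choice of $f$. The only point worth the slight care above is the verification of the boundedness hypothesis, which is immediate from the finiteness of $\Omega$; everything else is the normalization identity $\sum_{\bx \in \Omega} p(\bx; \bW) = 1$.
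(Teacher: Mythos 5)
Your proposal is correct and matches the paper's (implicitly stated) argument exactly: the corollary is a direct instantiation of Theorem~\ref{t:unbiasSuperF} with $f_1 \equiv 1$, where $F(\bW,f_1) = Z(\bW)\sum_{\bx \in \Omega} p(\bx;\bW) = Z(\bW)$ by normalization. Your extra care in verifying the boundedness hypothesis $B \geq \sup_{\bx}\Vert f_1(\bx)\Vert_1 = 1$ is a sound touch but changes nothing substantive.
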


\section{Training Restricted Boltzmann Machines}\label{s:RBM}
In what follows we explore the connections between MCLV-$K$ and learning RBMs using MCMC methods.
First, we show how MCLV-$K$ can provide a finite-sample unbiased and asymptotically consistent estimate of the direction
 of the RBM gradient.

\subsection{MCLV-$K$ Gradient Estimates}\label{s:CD}
In what follows we will provide an estimate of the gradient of the negative log-likelihood of RBMs using MCLV-$K$.
Our gradient will have a scaling factor but the gradient direction is the same as the original gradient:
\begin{align*}
\nabla_\bW \cL_Z &= 
 \frac{Z(\bW)}{Z_\cS(\bW)} \left( \frac{1}{N} \sum_{n=1}^N \bv_n^\mathsf{T} \EE_\bW[\bh | \bv_n] - \EE_\bW[\bv^\mathsf{T} \bh] \right).
\end{align*}
The scaling $Z(\bW)/Z_\cS(\bW)$ is constant given $\bW$. In our current implementation, we use Corollary~\ref{c:unbiasZ} to estimate $Z(\bW)/Z_\cS(\bW)$ and divide the gradient by it, compensating for the scaling at essentially no computational or memory cost.

\begin{corollary}[\LV-$K$: The \LVname Estimator]\label{c:lv}
Let $\Phi(\bW)$, $\cS$, $\bx$, the tour $(\bx,\bX^{(r)}(2),\ldots, \bX^{(r)}(\xi^{(r)}))$, $R$, $K$, and $\cC_k$ be as defined in Theorem~\ref{t:Ft}. 
Then, for a learning rate $\eta > 0$,
\begin{equation}\label{eq:LV}
\begin{split}
&\widehat{\nabla_\bW \cL_\text{\LV}}(K,R)  = \eta \Bigg( \frac{\widehat{\EE}[\xi]}{N} \sum_{n=1}^N  \frac{\partial E(\bx_n; \bW)}{\partial \bW} \\
& - \frac{ \sum_{k=1}^K \sum_{(\bX(1),\ldots, \bX(k)) \in \cC_k} \sum_{i=1}^{k} \frac{ \partial  E(\bX^{(r)}(k); \bW) }{\partial \bW}}{\sum_{k=1}^K | \cC_k |} \Bigg) ,
\end{split}
\end{equation}
is a consistent ($K,R \to \infty$) estimator of the energy-model gradient in eq.\eqref{eq:grad}, where $\widehat{\EE}[\xi] = \frac{\sum_{k=1}^K | \cC_k | k}{\sum_{k=1}^K | \cC_k |}$ is the empirical expectation of the tour lengths. 

Moreover, the contribution of a tour of length $k$ to the negative statistics of the gradient is proportional to 
$$P[\xi = k] \cdot k \cdot \EE[\partial  E(\tilde{\bX}_k; \bW)/\partial \bW ],$$
where $\tilde{\bX}_k$ is a random state of a tour of length $k$.
If the Markov chain $\Phi(\bW)$ satisfies the conditions of Theorem~\ref{t:geotail}, then $P[\xi = k] \cdot k = e^{-O(k)}$, so that {\em extremely long} tours do not influence the gradient.
\end{corollary}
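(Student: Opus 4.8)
My plan is to reduce the corollary to the estimators already analyzed, rather than re-deriving the regenerative machinery from scratch. The two halves of eq.~\eqref{eq:LV} are each, up to the constant $Z_\cS(\bW)=\sum_{\by\in\cS}e^{-E(\by;\bW)}$, instances of the MCLV-$K$ functional $\hat F^{(K,R)}(\bW,f)$. First I would observe that the negative-statistics term is exactly $\hat F^{(K,R)}(\bW,\partial E(\cdot;\bW)/\partial\bW)/Z_\cS(\bW)$: comparing with eq.~\eqref{eq:Ft}, the factor $\sum_{\by\in\cS}e^{-E(\by;\bW)}$ carried by $\hat F$ is precisely what is absent from the negative term of eq.~\eqref{eq:LV}. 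By Theorem~\ref{t:Ft} (and its consistent refinement in Theorem~\ref{t:unbiasSuperF}) this term converges to $F(\bW,\partial E/\partial\bW)/Z_\cS(\bW)=(Z(\bW)/Z_\cS(\bW))\,\EE_\bW[\partial E(\bx;\bW)/\partial\bW]$; since $\partial E/\partial\bW'=-\bv\bh^\tT$, this equals $-(Z/Z_\cS)\,\EE_\bW[\bv\bh^\tT]$, i.e.\ $(Z/Z_\cS)$ times the negative statistics of eq.~\eqref{eq:grad}.

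Next I would handle the scaling $\widehat{\EE}[\xi]$ on the positive term. The key identity is $\widehat{\EE}[\xi]\,Z_\cS(\bW)=\hat F^{(K,R)}(\bW,f_1)$ with $f_1\equiv 1$: inserting $f_1$ into eq.~\eqref{eq:Ft} collapses the inner triple sum to $\sum_k|\cC_k|\,k$, so $\hat F^{(K,R)}(\bW,f_1)=Z_\cS(\bW)\sum_k|\cC_k|k/\sum_k|\cC_k|=Z_\cS(\bW)\,\widehat{\EE}[\xi]$. By Corollary~\ref{c:unbiasZ}, $\hat F^{(K,R)}(\bW,f_1)\to Z(\bW)$, hence $\widehat{\EE}[\xi]\to Z(\bW)/Z_\cS(\bW)$ (equivalently, this is the Kac mean-recurrence-time $1/\pi'(\bS)=Z/Z_\cS$ of the collapsed chain, which is why $\xi$ is counted from one $\cS$-visit to the next). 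Multiplying the exactly computable positive statistics $\frac1N\sum_n\partial E(\bx_n;\bW)/\partial\bW=-\frac1N\sum_n\bv_n\EE_\bW[\bh|\bv_n]^\tT$ by this factor scales it by the same $Z/Z_\cS$. Subtracting the two halves and multiplying by $\eta$ then yields $\eta(Z/Z_\cS)(\text{positive}-\text{negative})=\eta\,\nabla_\bW\cL_Z$, matching eq.~\eqref{eq:grad} up to the fixed factor $Z/Z_\cS$ and the overall sign fixed by the descent convention. Consistency as $K,R\to\infty$ is inherited: the i.i.d.\ tour structure of Lemma~\ref{c:perfect} supplies the law of large numbers in $R$, while the finite-$K$ truncation is exactly the quantity bounded in Theorem~\ref{t:Ft}, vanishing as $K\to\infty$ (or removed outright by the $K_\text{dyn}$ of Theorem~\ref{t:unbiasSuperF}).

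For the second assertion I would regroup the negative term by tour length: $\hat N=\sum_{k=1}^{K}(|\cC_k|/R')\cdot\frac{1}{|\cC_k|}\sum_{\sigma\in\cC_k}\sum_{h=1}^{k}\partial E(\bX(h);\bW)/\partial\bW$ with $R'=\sum_k|\cC_k|$. As $R\to\infty$, $|\cC_k|/R'\to P[\xi=k]$ and the normalized inner double sum converges to $k\cdot\EE[\partial E(\tilde\bX_k;\bW)/\partial\bW]$, the expectation being over a uniformly chosen state $\tilde\bX_k$ of a length-$k$ tour and the factor $k$ being just the number of summands $\sum_{h=1}^{k}$. This produces the claimed contribution $P[\xi=k]\cdot k\cdot\EE[\partial E(\tilde\bX_k;\bW)/\partial\bW]$. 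The final claim then follows from Theorem~\ref{t:geotail}: under its reachability hypothesis $P[\xi>k]=\alpha^{k}e^{o(k)}$ for some $0<\alpha<1$, so $P[\xi=k]=P[\xi>k-1]-P[\xi>k]$ decays geometrically as well, and the polynomial factor $k$ is absorbed, leaving $P[\xi=k]\cdot k=e^{-\cO(k)}$.

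The main obstacle I anticipate is careful bookkeeping rather than a deep difficulty: one must track the $Z_\cS(\bW)$ normalizers so that the positive and negative halves are scaled by the \emph{same} $Z/Z_\cS$ factor (this is what makes $\widehat{\EE}[\xi]$, via Corollary~\ref{c:unbiasZ}, the correct multiplier on the positive statistics), and one must justify the joint limit $K,R\to\infty$ by separating the law-of-large-numbers convergence in $R$ from the vanishing of the $K$-truncation bias, invoking the geometric tail of Theorem~\ref{t:geotail} so the two limits may be taken together.
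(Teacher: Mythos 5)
Your proof is correct and follows essentially the same route as the paper's: both reduce the corollary to the estimator $\hat{F}$ of Theorems~\ref{t:Ft} and~\ref{t:unbiasSuperF}, the only difference being that the paper plugs in the single combined function $f(\by)=\frac{1}{N}\sum_{n=1}^N \partial E(\bx_n;\bW)/\partial \bW - \partial E(\by;\bW)/\partial \bW$ (whose constant part, summed along a tour of length $\xi$, produces the $\widehat{\EE}[\xi]$ factor automatically), whereas you split this $f$ by linearity into $\partial E(\cdot;\bW)/\partial\bW$ and $f_1\equiv 1$, invoking Corollary~\ref{c:unbiasZ} for the latter---equivalent bookkeeping with the same $Z_\cS(\bW)$ cancellation. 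Your explicit regrouping by tour length for the $P[\xi=k]\cdot k\cdot \EE[\partial E(\tilde{\bX}_k;\bW)/\partial\bW]$ claim and your use of Theorem~\ref{t:geotail} for the geometric-tail claim supply details the paper's two-line proof leaves implicit, and are consistent with it.
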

\begin{corollary}[Unbiased Gradient Direction Estimator: \LV-$K_\text{dyn}$]\label{c:lvunb}
Consider the estimator in Corollary~\ref{c:lv}
and let us dynamically grow $K$ (denoted $K_\text{dyn}$) until the MC reaches a stopping state in $\cS$. Then,
\[
\EE\left[\widehat{\nabla_\bW \cL_\text{\LV}}(K_\text{dyn},R)\right]
\propto \nabla_\bW \cL_Z ,
\]
is an unbiased estimate of the RBM gradient direction.
\end{corollary}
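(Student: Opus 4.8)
The plan is to reduce Corollary~\ref{c:lvunb} to the already-established unbiasedness of the dynamically-adapted partition-scaled estimator in Theorem~\ref{t:unbiasSuperF}. First I would observe that the \LV estimator in eq.~\eqref{eq:LV} is assembled from just two random quantities, the empirical tour length $\widehat{\EE}[\xi]$ and the normalized sum of energy gradients collected along the tours, and that both are instances of the generic estimator $\hat{F}^{(K,R)}(\bW,f)$ of Theorem~\ref{t:Ft} after dividing out $Z_\cS(\bW) = \sum_{\by \in \cS} e^{-E(\by;\bW)}$. Concretely, taking $f_1 \equiv 1$ gives $\hat{F}^{(K,R)}(\bW,f_1) = Z_\cS(\bW)\,\widehat{\EE}[\xi]$, while taking $g(\bx) = \partial E(\bx;\bW)/\partial \bW$ gives $\hat{F}^{(K,R)}(\bW,g) = Z_\cS(\bW)$ times the negative-statistics term of eq.~\eqref{eq:LV}. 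Both $f_1$ and $g$ satisfy the $\Vert \cdot \Vert_1 < \infty$ and boundedness hypotheses of Theorem~\ref{t:Ft}, since $\Omega$ is finite and $\bW$ is fixed.

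Next I would invoke the dynamic-$K$ version of the theorem. By Theorem~\ref{t:unbiasSuperF}, growing $K$ until the chain hits $\cS$ makes $\hat{F}^{(K_\text{dyn},R)}$ exactly unbiased, so that $\EE[\hat{F}^{(K_\text{dyn},R)}(\bW,g)] = F(\bW,g) = Z(\bW)\,\EE_\bW[\partial E(\bx;\bW)/\partial \bW]$, and by Corollary~\ref{c:unbiasZ}, $\EE[\hat{F}^{(K_\text{dyn},R)}(\bW,f_1)] = Z(\bW)$. Dividing each by the constant $Z_\cS(\bW)$ then yields $\EE[\text{negative statistics}] = \tfrac{Z(\bW)}{Z_\cS(\bW)}\EE_\bW[\partial E/\partial \bW]$ and $\EE[\widehat{\EE}[\xi]] = \tfrac{Z(\bW)}{Z_\cS(\bW)}$.

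I would then take the expectation of eq.~\eqref{eq:LV} term by term. The negative-statistics term is handled directly by the step above. For the positive-statistics term $\tfrac{\widehat{\EE}[\xi]}{N}\sum_n \partial E(\bx_n;\bW)/\partial \bW$, I would use that the tour randomness determining $\widehat{\EE}[\xi]$ is independent of the positive-phase statistics computed from the training data, so the expectation of the product factorizes into $\EE[\widehat{\EE}[\xi]]$ times the expected positive statistics. Finally I would substitute the RBM energy derivatives, noting that $\EE_\bW[\partial E/\partial \bW] = -\EE_\bW[\bv\bh^{\tT}]$ is the negative statistics of eq.~\eqref{eq:grad} and that the expected positive-phase term equals $-\tfrac{1}{N}\sum_n \bv_n \EE_\bW[\bh\mid\bv_n]^{\tT}$, the positive statistics. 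Collecting the common factor $\tfrac{Z(\bW)}{Z_\cS(\bW)}$ reproduces $\nabla_\bW \cL_Z$ up to the fixed multiplicative constant $\eta$, establishing the claimed proportionality.

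The main obstacle I anticipate is the product in the positive-statistics term: unbiasedness there hinges on cleanly separating the tour sampling from the positive phase so that $\EE[\widehat{\EE}[\xi]\cdot(\cdots)]$ factorizes; if the sampled hidden states seeding $\cS$ are reused in the positive phase, one must argue independence conditionally on $\{\bv_n\}$ or simply draw fresh samples. The ratio structure inside $\widehat{\EE}[\xi]$ and the normalization by $\sum_{k} |\cC_k|$ are \emph{not} obstacles here, because that is precisely the difficulty that Theorem~\ref{t:unbiasSuperF} has already absorbed.
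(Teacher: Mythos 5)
Your proof is correct, and it reaches the result by a genuinely different decomposition than the paper. The paper's proof (given in the appendix for Corollary~\ref{c:lv}) makes a single application of Theorem~\ref{t:unbiasSuperF} with the one combined function $f(\by) = \frac{1}{N}\sum_{n=1}^N \partial E(\bx_n;\bW)/\partial\bW - \partial E(\by;\bW)/\partial\bW$: because the positive-phase part of this $f$ is constant in $\by$, summing $f$ along a tour of length $\xi$ automatically contributes $\xi$ copies of the positive statistics, which is exactly why $\widehat{\EE}[\xi]$ appears in eq.~\eqref{eq:LV}, and the whole estimator becomes, verbatim, $\eta\,\hat{F}^{(K_\text{dyn},R)}(\bW,f)/Z_\cS(\bW)$ averaged over tours --- so unbiasedness is a one-line corollary and the product term you worried about never arises. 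You instead apply the theorem twice, with $f_1 \equiv 1$ and with $g = \partial E/\partial\bW$, and recombine by linearity; this buys a useful interpretation (via Corollary~\ref{c:unbiasZ}, $\widehat{\EE}[\xi]$ is itself an unbiased estimator of $Z(\bW)/Z_\cS(\bW)$, consistent with Kac's theorem as used in the paper's proof of Theorem~\ref{t:unbiasSuperF}), but it costs you the extra factorization step for $\EE\bigl[\widehat{\EE}[\xi]\cdot(\text{positive stats})\bigr]$. That step is in fact simpler than you anticipate: the entire analysis is conditional on the realized stopping set $\cS$ (the tours are run with $\cS$ fixed), so the positive-phase term is $\cS$-measurable, hence a constant under the relevant conditional expectation, and plain linearity suffices --- no independence argument is needed. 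Your caveat about the sampled $\bh_n$ being reused in both $\cS$ and the positive phase is a real subtlety, but it is one the paper shares and likewise resolves only implicitly by reading the unbiasedness statement conditionally on $\cS$ (note $Z_\cS(\bW)$, and hence $\nabla_\bW\cL_Z$ as written, also depends on the realized $\cS$). One last minor point in your favor: with $K_\text{dyn}$ every tour completes, so the denominator $\sum_k|\cC_k| = R$ is deterministic and the ratio structure of $\widehat{\EE}[\xi]$ is indeed harmless, as you note.
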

The proofs of the two above corollaries follow directly from Theorems~\ref{t:Ft} and~\ref{t:unbiasSuperF}, respectively.

\subsection{Correspondence and Differences Between \LV-$\boldsymbol{K}$ and CD-$\boldsymbol{K}$} \label{s:TTCD}
In this section we explore a correspondence between \LV-${K}$ (proposed in Corollary~\ref{c:lv}) and CD-${K}$  to train RBMs.
We will also emphasize some differences that will give us some new insights into CD-$K$.
The correspondence is as follows:
{\bf (a)} consider a mini-batch of training examples $\{\bv_i\}_{i=1}^N$;
{\bf (b)} the stopping set is $\cS^{(m)}_{H\!N}$, described in eq.\eqref{eq:S};
{\bf (c)} the number of tours $R$ of \LV-$K$ is the number of training examples in the mini-batch $N$, i.e., $R=N$.

One can readily verify that the Gibbs sampling updates of \LV-$K$ and CD-$K$ are similar except for the following key differences:
{\bf (i)} \LV-$K$ starts at a state $\bx$ of $\cS^{(m)}_{H\!N}$ with probability proportional to $\exp(-E(\bx;\bW))$, CD-$K$ starts uniformly over the training examples. Thus, the negative phase of \LV-$K$ tends to push the model away from unbalanced probabilities over the training examples.
{\bf (ii)} at every Markov chain step, \LV-$K$ stops early if it has reached a state in $\cS^{(m)}_{H\!N}$, while CD-$K$ will always perform all $K$ steps. 
{\bf (iii)} the gradient estimates of \LV-$K$ use only the completed tours, while CD-$K$ uses all tours; 
{\bf (iv)} the gradient estimates of \LV-$K$ use all states visited by the MC during a tour, while CD-$K$ uses only the last visited state.

A long sequence of states visited by the CD-$K$ Gibbs sampler can be broken up into tours if the stopping state contains only the starting state.
Figure~\ref{f:tourbias} illustrates three MCMC runs starting at  visible states representing ``7'', ``3'', and ``4'', broken up into tours whenever the starting hidden state is sampled again.
Starting from visible state ``7'', CD-$K$ ignores the completed tour {\em Tour 1}, which \LV-$K$ uses for its gradient estimate; and CD-$K$ proceeds to use the state in the middle of {\em Tour A} for its gradient estimate.
CD-$K$ also uses a state in the incomplete {\em Tour 2}, which \LV-$K$ ignores as incomplete.
Finally, CD-$K$ ignores {\em Tour 3} and proceeds to use the state in the beginning of {\em Tour B} for its gradient estimate.

This means that, for $K \geq 2$, CD-$K$ is more likely to sample states from longer tour than shorter tours.
This bias is the inspection paradox~\cite{wilson1983inspection}.
Interestingly, this bias makes CD-$K$, $K \geq 2$, significantly different from CD-1, which has no such bias.
Note that \LV-$K$ has the opposite bias: it ignores tours longer than $K$; the bias of \LV is measurable (Theorem~\ref{t:Ft}) if we can estimate the average tour length.

\begin{figure}[ht!]
	\centering
 	\begin{minipage}[t]{.4\textwidth}
			\centering
			\includegraphics[height=1.4in]
			{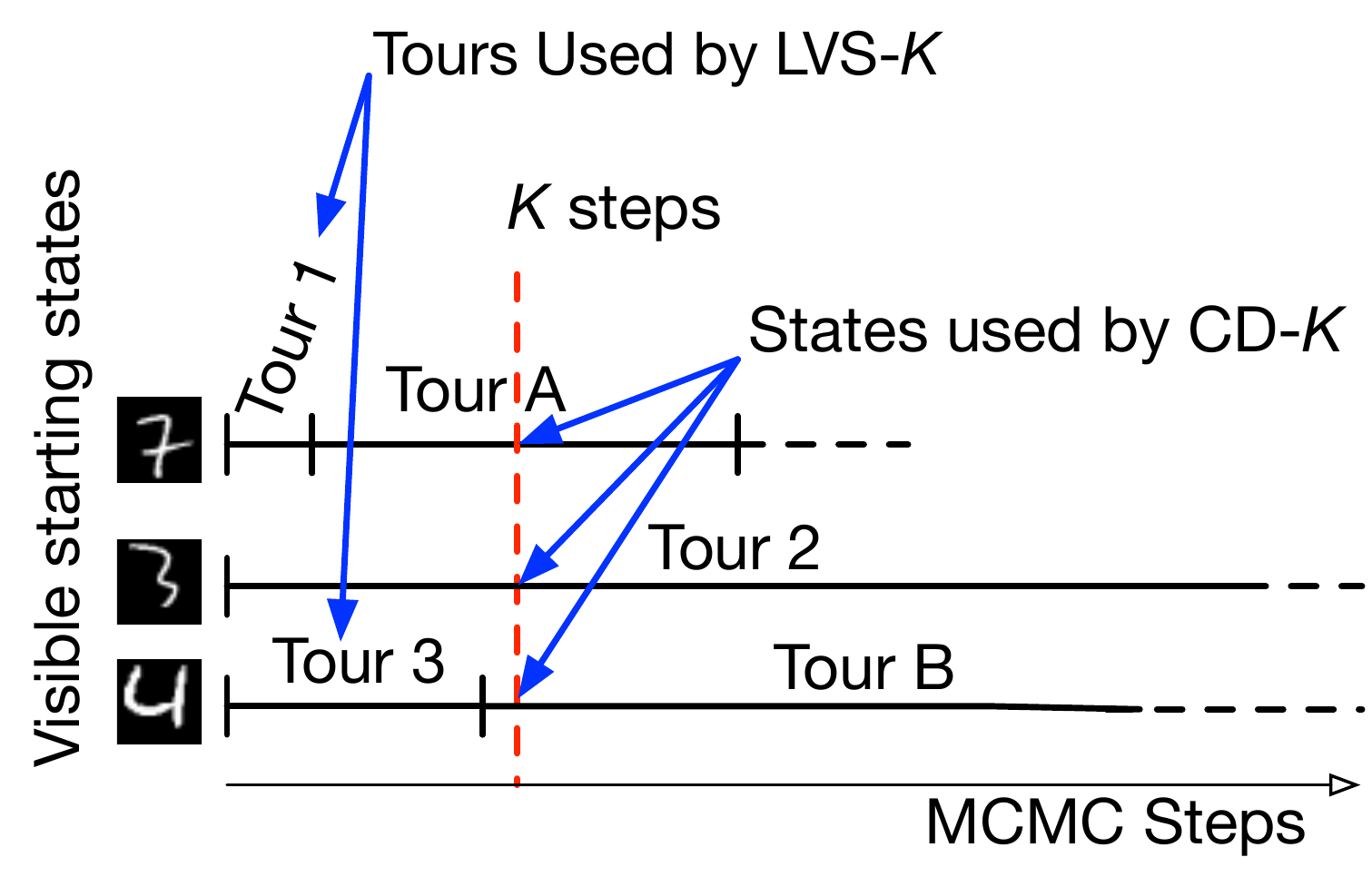}
			\caption{CD-$k$ bias towards longer tours for $k\geq 2$;}
			\label{f:tourbias}
 	\end{minipage}
\end{figure}

\subsection{Computational Complexity} \label{s:Complexity}

In this section we give the time and space complexities of LVS-$K$ and  CD-$K$. Let $|\bW|$ denote the number of elements in $\bW$, $= n_{V}*n_{H}$ and $n_{X}=n_{V}+n_{H}$.
In terms of space, LVS-$K$ needs $\cO(N \, m \, n_{X})$ space to store the $\cH^{(m)}_N$ which is $m$ times the requirement for CD-$K$.
At every epoch, LVS-$K$ samples a stopping set, which involves a matrix multiplication followed by algebraic operations over a matrix. 
The matrix multiplication which takes $\cO(N m |\bW|)$ upper bounds the time. 
Computing the free energies of the hidden state also takes the same time. 
Adding the states of the stopping set to a heap for easier sampling takes $\cO(N m)$ time and allows us to sample starting states for the tours in $\cO(N \log(N m))$.
Every Gibbs step is again bounded by the time taken for matrix multiplication which takes a total of $\cO(N |\bW| K)$ time.
Checking stopping set membership takes $\cO(N K n_{X})$ amortized time assuming that standard algorithms used by hash sets, e.g. MD5, take $\cO(n_{X})$ time to evaluate.
Computing the gradient and updating $\bW$ takes  $\cO(N |\bW|)$  time.

Therefore LVS-$K$ takes $\cO(N K n_{X} + N m |\bW|+ N K |\bW|) \equiv \cO(N |\bW| (m+ K))$ time, compared to CD-$K$ which takes $\cO(N K |\bW|)$. In the general case $m \in \cO(K)$, $\therefore$ the asymptotic complexity of CD-$K$ and LVS-$K$ are the same.

\section{Related Work}

\paragraph*{AIS, MC changes, and CD-$K$ extensions.}
Annealed Importance Sampling (AIS)~\cite{neal2005estimating,RadfordNeal2001} uses two distinct Markov transitions kernels and has been applied by Salakhutdinov and Murray~\cite{salakhutdinov2008quantitative} to obtain unbiased estimates of the partition function of an RBM. 
Like AIS, the Russian roulette pseudo-marginal likelihood is also a Markov chain modification to sample from the steady state distribution~\cite{lyne2015russian}.
These modifications cannot be readily applied to the original RBM Markov chain, nor they provide insights into the learning process.
MCLV-$K$ is a new tool that can be used from visual inspection of convergence to proposing new gradient estimators, as seen in our empirical results.

RBMs are powerful models~\cite{montufar2015discrete} and the analysis of CD-$K$ has a long history~\cite{Hinton2012}. A few past studies have focused on how CD-$K$ learns well RBMs~\cite{Carreira-Perpinan2005,Yuille2005}, have some fixable issues learning RBMs~\cite{Schulz2010,Fischer2014,Prats2014}, may approximate some objective function~\cite{Hinton2002}, or do not approximate any objective function~\cite{Bengio2009,Sutskever2010}. 
Orthogonally, Persistent Contrastive Divergence (PCD)~\cite{Tieleman2008} improves CD-$K$ in some problems by using the starting state of the CD-$K$ Markov chain as the end state of the previous training epoch (simulating a single sample path, assuming the MC does not change much between epochs, which is not always true~\cite{Schulz2010}). Clearly, PCD could be adapted as a MCLV method, which we see as future work.

The presence of training data is key to the practicality of MCLV.
Without training data, obtaining error bounds with MCLV can be prohibitively expensive. In the worst-case, there is no polynomial time algorithm that can estimate the probabilities of an RBM model within a constant factor~\cite{long2010restricted}, assuming P$\neq$NP.
But most real-world machine learning problems are {\em supposed} to be much easier than general MCMC results would have us believe. We are {\em given} a good hint of what should be a larger number of high-probability states in the steady state: the states containing the training examples. 
Unfortunately, vanilla MCMC methods do not incorporate this extra information to speed up convergence in a principled way.
We believe the lessons learned in this paper will be invaluable to design new classes of Markov chain methods tailored to machine learning applications.

\paragraph*{Las Vegas algorithms for Markov chain sampling.}
Perfect Sampling~\cite{Corcoran2002,fill1997interruptible,fill2000extension,propp1996exact,propp1998get,wilson2000layered} is an example of a Las Vegas algorithm for MCMC applications.
Unfortunately, energy-based models can easily reach trillions of states while perfect sampling methods rarely scale well unless some specific MC structure can be exploited.
We are unaware of clever CFTP constructions for  arbitrary energy-based models.

Mykland et al.~\cite{Mykland1995} with a few follow-up works first proposed the use of regeneration in the context of MCMC to estimate mixing times, however these techniques are mostly of theoretical interest~\cite{Baxendale2005,Gilks1998a,Hobert2002,Roberts1999} rather than of practical utility for energy-based models.
Path coupling is another alternative to estimate mixing times~\cite{Bubley1997}.
More recently,  path coupling was used to develop a theory of Ricci curvature for Markov chains~\cite{Ollivier2009}.
The connections between Ricci curvature estimation and MCLV-$K$ are worth exploring in future work.

\section{Empirical Results}

Our experiments use the MNIST dataset, which consists of 70,000 images of digits ranging from 0 to 9, each image having $28 \times 28$ pixels (a total of $784$ pixels per image), divided into 55,000 training examples and 15,000 test examples.
We use this dataset for historical reasons.
MNIST is arguably the most extensively studied dataset in RBM training, e.g. ~\cite{Hinton2002,Hinton2012,hinton2006fast,Carreira-Perpinan2005,Salakhutdinov2009,Tieleman2008,Bengio2009}.
Our goal is to show that MCLV-$K$ is able to give new insights into RBM training (and improved performance) even in a studied-to-death dataset such as MNIST. The experimental details of our empirical results are presented in the appendix.
We use \LV-$1$ to train the RBM model used in the following experiments (CD-$K$ tends to give very high probability to a few examples in the training data).
We observe little difference between \LV-$1$, \LV-$3$, and \LV-$10$ (for reasons that will be clear soon).

\paragraph{RBM learning.}%
\label{s:grad}
Our first set of empirical results compares  \LV-$K$, $K \in \{1,3,10\}$, CD-$K$, $K \in \{1,10\}$ and PCD-$K$, $K \in \{1,10\}$ by training an RBM using stochastic gradient descent, where the gradient estimates are computed using the respective methods.
We train RBMs with $n_{H} = 32$ hidden neurons for a total of 100 epochs (inclusive of 15 warm-up epochs of CD-$1$ for \LV-$K$), using a learning rate of $0.1$ which decays according to a Robbins-Monro schedule. Weight decay and momentum were not used. 
The initial $\bW$ weights are sampled uniformly from $U \left( \frac{-0.1}{\sqrt{n_V + n_H}}, \frac{0.1}{\sqrt{n_V + n_H}} \right)$, where $n_V$ and $n_H$ denote the number of visible and hidden neurons, respectively. Hidden biases are initialized as zero, while visible biases are initialized as $\log(p_i/(1-p_i))$ \cite{Hinton2012}, where $p_i$ is the empirical probability of feature $i$ being active.

The small number of hidden units is to enable us  
to evaluate the true performance: we compute the exact partition function of the trained RBM and calculate the average log-likelihood $\frac{1}{N} \sum_{n=1}^N \log p(\bv_n)$. All results are means calculated from $10$ executions. In all \LV-$K$ experiments we use $m=1$, for simplicity.
The negative log-likelihood of \LV-$K$, PCD-$K$ and CD-$K$ are presented in Table~\ref{tab:h32results}. 

Subsequently, in order to compare our results with those presented in ~\citeauthor{Tieleman2008} (\citeyear{Tieleman2008}), we train RBMs with $n_{H} = 25$ and initial learning rates between $10^{-4}$ and $1$. We observe that larger learning rates ($10^{-1}$ to $1$) are more appropriate for \LV-$K$, resulting in faster convergence and increased performance. Small rates (e.g. $10^{-4}$) cause tours to rarely finish, severely slowing down the training. On the other hand, CD-$K$ and PCD-$K$ fail to converge with learning rates slightly larger than $10^{-2}$. The results for this experiment, along with the best learning rates for each method, are presented in Table~\ref{tab:h25results}.


In conclusion, \LV-$K$ drastically (and paired t-test significantly) outperforms CD-$K$ and PCD-$K$ w.r.t.\ the log-likelihood in all settings, even \LV-$1$ performs significantly better than PCD-$10$. 
However we were unable to reproduce the likelihood of $\approx -130$ for PCD achieved by ~\citeauthor{Tieleman2008} (\citeyear{Tieleman2008}). 

\begin{table}[ht]
\centering
    \begin{tabular}{l|cc}
\textbf{Method}	&	\textbf{Training } &	\textbf{Testing} \\
\hline 
CD-$1$	&	-167.3 (2.7) &	-166.6 (2.8) \\
CD-$10$	&	-154.3 (3.3) &	-153.4 (3.3) \\
PCD-$1$	&	-153.0 (4.9) &	-152.1 (4.7) \\
PCD-$10$	&	-139.3 (3.2) &	-138.5 (3.3) \\
\textbf{\LV-1}	&	\textbf{-134.0 (1.0)} &	\textbf{-133.3 (1.0)} \\
\textbf{\LV-10}	&	\textbf{-133.3 (1.0)} &	\textbf{-132.6 (1.0)} \\
\textbf{\LV-3}	&	\textbf{-133.7 (0.8)} &	\textbf{-132.9 (0.7)} \\
    \end{tabular}
    \caption {(Higher is better) Average log-likelihood on the MNIST dataset using a RBM with $32$ hidden neurons. Results are means over 10 executions after 100 epochs.}
    \label{tab:h32results}
\end {table}

\begin{table}[ht]
\centering
    \begin{tabular}{ll|cc}
\textbf{Method}	& \textbf{Learning Rate}	&	\textbf{Training } &	\textbf{Testing} \\
\hline 
CD-$1$	&	0.01	&	-169.8 (2.6)	&	-169.0 (2.6)	\\
CD-$10$	&	0.01	&	-156.4 (0.5)	&	-155.6 (0.5)	\\
PCD-$1$	&	0.01	&	-147.8 (0.5)	&	-147.0 (0.5)	\\
PCD-$10$	&	0.01	&	-147.4 (0.5)	&	-146.7 (0.5)	\\
\textbf{\LV-1}	&	\textbf{0.1}	&	\textbf{-138.3 (1.3)}	&	\textbf{-137.5 (1.4)}	\\
\textbf{\LV-10}	&	\textbf{0.1}	&	\textbf{-138.1 (1.1)}	&	\textbf{-137.4 (1.2)}	\\
\textbf{\LV-3}	&	\textbf{0.1}	&	\textbf{-138.2 (1.0)}	&	\textbf{-137.5 (1.1)}	\\
    \end{tabular}
    \caption {(Higher is better) Average log-likelihood on the MNIST dataset using a RBM with $25$ hidden neurons. Results are means over 10 executions after 100 epochs, using appropriate learning rates for each method.}
    \label{tab:h25results}
\end {table}

\paragraph{Tours lengths and stopping state.}%
We now analyze the tour lengths as a function of: (a) $n_H$, the number of hidden units, and (b) the size of the stopping set $|\cS^{(m)}_{N\!H}|$, where $\cS^{(m)}_{N\!H}$ is built from the training data as defined in eq.\eqref{eq:S}. Note that the $r$-th tour ends at state $\bX^{(r)}(\xi) = \left(\bv^{(r)}(\xi),\bh^{(r)}(\xi) \right)$ whenever $\bX^{(r)}(\xi+1) \in \cS_{N\!H}$, and that the stopping criteria only truly depends on $\bh^{(r)}(\xi+1)$ since $\cS^{(m)}_{N\!H}$ contains all possible visible states.

Figure \ref{f.ccdf} shows the CCDF of the tour lengths for different values of $n_H$. 
Most tours are extremely short for RBMs with few hidden neurons (for $n_H = 16$, more than $99\%$ have length one), but significantly increase as we increase $n_H$ with a very heavy tail.
Thus, it is expected that we see little difference between \LV-$1$, \LV-$3$, and \LV-$10$.
Moreover, these heavy tails likely causes strong inspection-paradox biases for CD-$K$ in high-dimensional RBMs. 

Most importantly, Figure \ref{f.ccdf} shows that tours either return within one step or are unlikely to return for a very long time. A closer inspection at these one-step tours, shows that over 99\% of the cases have the hidden state being the starting state. {\em Thus, it seems that RBMs (even with few hidden neurons) are just memorizing the training data, not learning how to generate new digits.} We conjecture, however, that if our training could force the tours to stop at distinct hidden states, and requires the tours to be possibly longer (but not too-long), the RBM might be taught how to generate new digits. 

Using $n_H = 32$ hidden neurons, Figure \ref{f.SNccdf} shows the probability that a tour takes more than $k$ steps, as we increase the number of stopping states by setting the values of $m \in \{1,4,7\}$ in $\cS^{(m)}_{N\!H}$. We see that the probability of tours finishing in a single step increases as we add more states to the stopping state. 
Thus, increasing the stopping set size can significantly shorten the tours, which in turn improves the estimates of MCLV-$K$ and \LV-$K$, and is an avenue to ameliorate MCMC issues in high-dimensional RBMs.

\begin{figure}[ht!]
	\centering
	\begin{minipage}[t]{.47\textwidth}
		\begin{subfigure}[t]{0.95\textwidth}
			\centering               
            \includegraphics[width=0.95\textwidth]{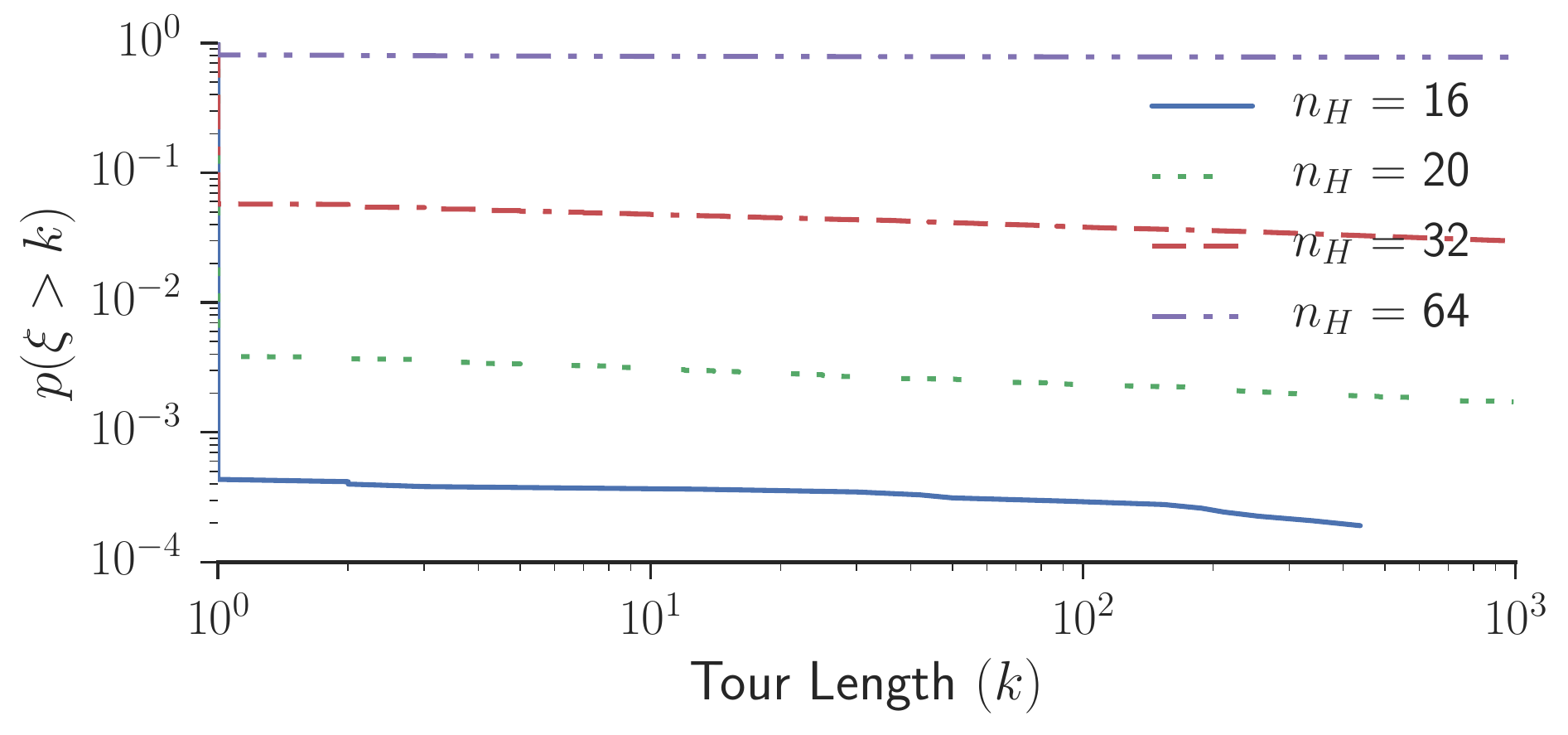}
			\caption{~}
			\label{f.ccdf}
		\end{subfigure}
\end{minipage}
    \begin{minipage}[t]{.47\textwidth}
		\begin{subfigure}[t]{0.95\textwidth}
			\centering
\includegraphics[width=0.95\textwidth]{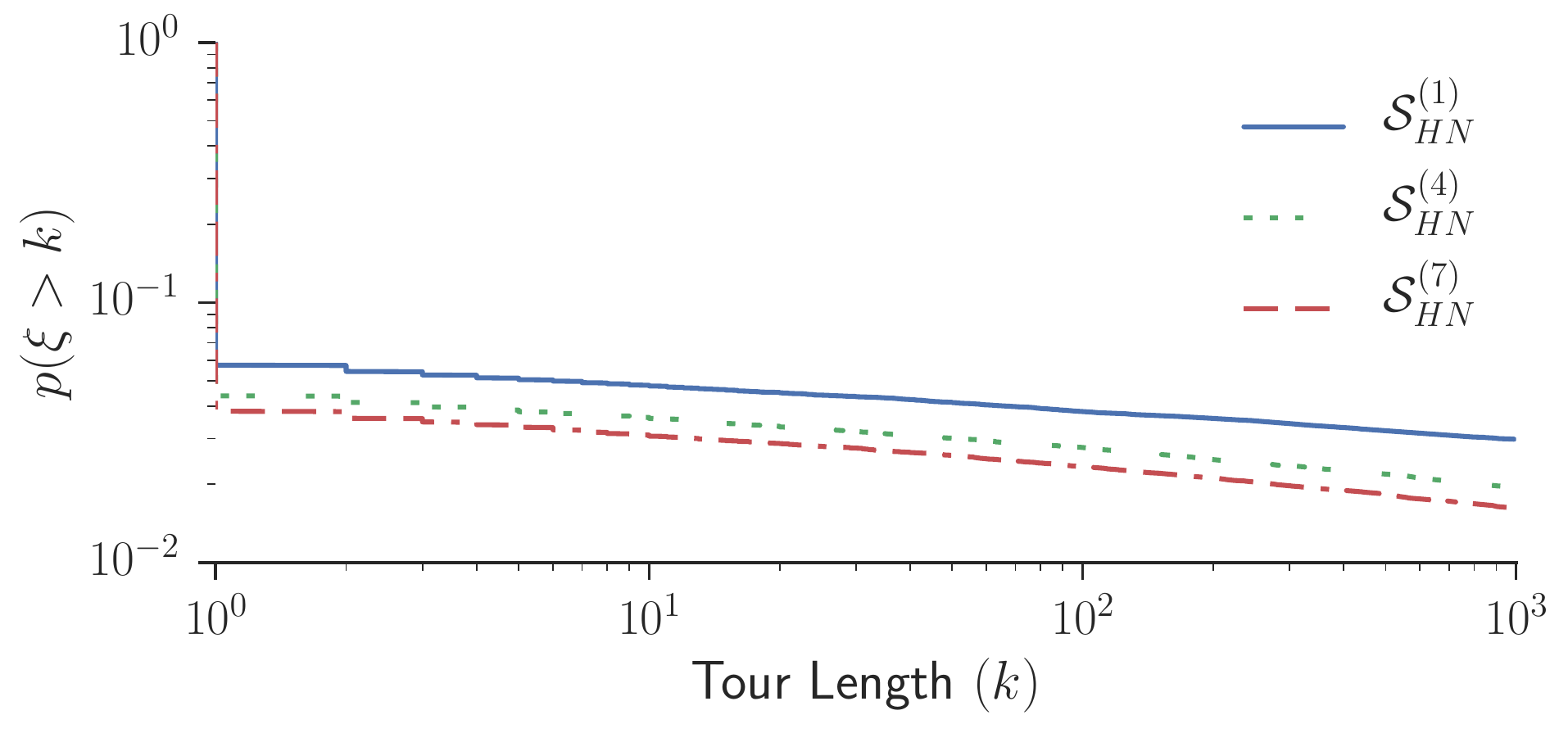}
			\caption{~}
			\label{f.SNccdf}
		\end{subfigure}
	\end{minipage}
	\begin{minipage}[t]{.47\textwidth}
		\begin{subfigure}[t]{0.95\textwidth}
			\centering		
            \includegraphics[width=0.95\textwidth]{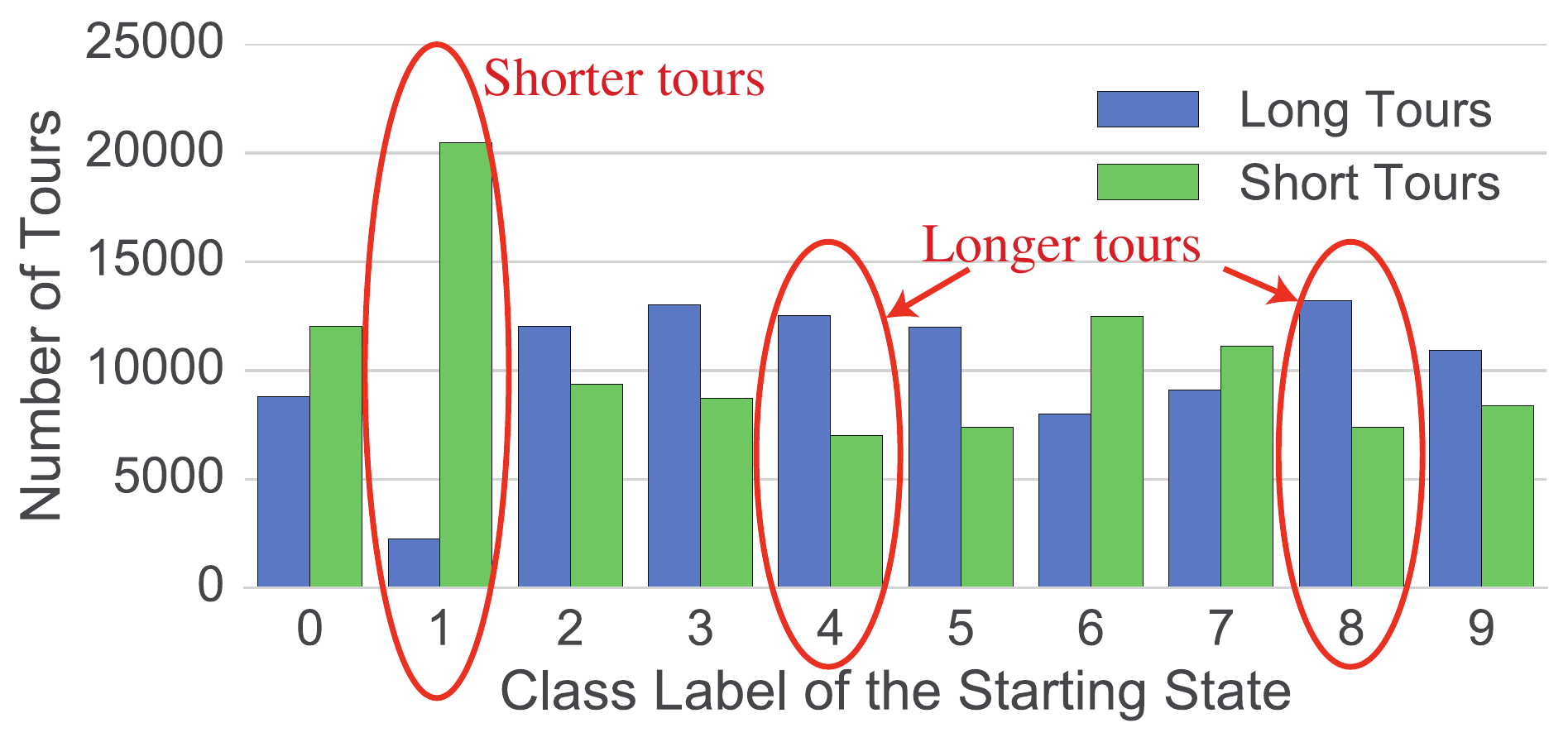}
			\caption{~}
\label{f.shortlongtourbar}
		\end{subfigure}
	\end{minipage}
	\caption{
		(a) Tour lengths CCDF for $n_H = \log_2 |H| \in \{16,20,32,64\}$ for LVS-1; 
		(b) Tour lengths CCDF variation for LVS-10 with  $n_{H} = 32$, using larger Stopping Sets;
        (c) Comparison of frequencies of short and long tours starting from labeled states on a trained RBM
	}
\end{figure}

\paragraph{Distribution modes of the learned RBM.}%
Overall, we may want to ask which digits (pictures) the model is learning to reproduce well.

Figure~\ref{f.shortlongtourbar} shows the length of the tours split by the type of digit starting the tour. Note that the RBM seems to learn well digits that are more consistent across the training data (e.g., numbers one and six) and have more trouble with digits that have more variability (e.g., numbers four and eight). 

As a visual inspection, Figure \ref{f.tours32} shows the next visible states of extremely short (length = 1) and long (unfinished after 99,999 steps) tours, for $n_H = 32$ hidden neurons. There is a clear relation between long tours and not-so-common examples in the training data. The first and third rows show the training examples; the next row shows their first visible state after one Gibbs sampling step. Note that the majority of the training examples are easy-to-recognize digits, with still recognizable digits after sampling.

The second part of Figure \ref{f.tours32} shows the training example and first visible samples of long tours. 
Note that the long tours tend to be digits that are either thicker (rarer in the data), or come in a not-so-standard shape than the digits in the first row.
Note that in half of the examples, their first Gibbs samples are not too similar to the original digit. 
This shows that the model is having trouble learning these less standard digits.
That long tours tend to start in odd-looking-examples, should help us better understand and avoid {\em fantasy particles} (visible states $\bv \in V$ that are not characteristic of the dataset but have high probability nonetheless~\cite{Tieleman2008}).

\begin{figure}[ht!]
	\centering
	\begin{minipage}[t]{.4\textwidth}
		\begin{subfigure}[t]{0.99\textwidth}
			\centering		
			\includegraphics[width=0.9\textwidth]{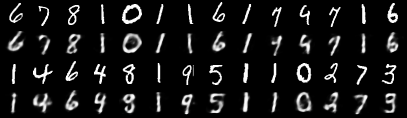}
			\caption{Short Tours}
		\end{subfigure}
	\end{minipage}
	\begin{minipage}[t]{.4\textwidth}
		\begin{subfigure}[t]{0.99\textwidth}
			\centering		
			\includegraphics[width=0.9\textwidth]{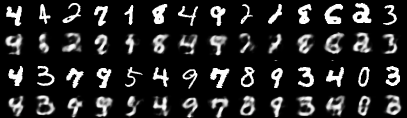}
			\caption{Long Tours}
		\end{subfigure}
	\end{minipage}
	\caption{
	\small Visible states of tours for $n_H = 32$ neurons. The first and third rows of each image show the visible states from the training data, whereas the second and fourth show the next visible state obtained through Gibbs Sampling}
	\label{f.tours32}
\end{figure}

\paragraph{Estimating the partition function.}
We use MCLV-$K_\text{dyn}$ to estimate the partition function $Z(\bW) = \sum_{\bv} \sum_{\bh} e^{-E( (\bv,\bh) ; \bW)}$ as specified in Corollary~\ref{c:unbiasZ} using an RBM with $n_H = 32$, so that we can easily compute the true partition function for comparison. We note that computing $Z_\cS(\bW)$ is fast as it is in the order of the number of training examples (as stated earlier). 

Following Corollary~\ref{c:unbiasZ}, we estimate $Z(\bW)$ with $\hat{F}^{(K_\text{dyn},R)}(\bW,f_1)$, with $f_1(x)=1$.
The average tour length in this example is estimated to be close to one (see Figure \ref{f:tourbias}).
Thus, $\hat{F}^{(K_\text{dyn},R)}(\bW,f_1) \approx Z_\cS(\bW) = 1.46 \times 10^{100}$ in this example.
In fact, $\hat{F}^{(K_\text{dyn},R)}(\bW,f_1)$ and the true partition function $Z(\bW)$ report the same value up to nearly machine precision (10-th decimal place).

\section{Conclusions}
This paper proposes a Las Vegas transformation of Markov Chain Monte Carlo (MCMC) for RBMs, denoted {\em Markov Chain Las Vegas} (MCLV). MCLV gives statistical guarantees in exchange for random running times.
Our empirical results show MCLV-$K$ is a powerful tool to learn and understand RBMs, with a gradient estimator \LV-$K$ that can better fit RBMs to the MNIST dataset than standard MCMC methods such as Contrastive Divergence (CD-$K$).

\section{Appendix}

\subsection{Proof of Corollary~\ref{c:Phi}}
\begin{proof}
Collapse the states of $\cS$ into a single state $\bS$ to form a state-collapsed MC $\Phi'(\bW)$, with transition probabilities given by Definition~\ref{d:SCMC}.
Let $(\bS,\bX^{(\cdot)}(2),\ldots, \bX^{(\cdot)}(\xi^{(\cdot)}))$ be a sequence of discrete states of the $r$-th tour of the state-collapsed MC $\Phi'(\bW)$.
Note that $\bS$ is the renewal state of the tour $\Xi^{(\cdot)}$, i.e., $\bX^{(\cdot)}(1) = \bS$.

The time reversibility of $\Phi(\bW)$ implies that 
$p(\bx ; \bW) p_\Phi (\bx, \by) = p(\by ; \bW) p_\Phi (\by, \bx)$, where $p_a$ indicates the probability transition matrix of MC $a$.
Let $Z_\bS(\bW) = \sum_{\by \in \cS} e^{-E(\by; \bW)}$.
We now show that $\Phi'(\bW)$ is time-reversible using the fact that the steady state distribution of $\Phi(\bW)$ is known up to a constant factor.
Thus, we ``guess'' the steady state distribution in $\Phi'(\bW)$  of $\bS$ as $p(\bS; \bW) =  Z_\bS(\bW)/Z(\bW)$ and verify that, because $\cS$ is a proper subset of $\Omega$, the balance equations of $\Phi'(\bW)$ are time reversible:
\begin{align*}
 p(\bS; \bW) p_{\Phi'}(\bS,\bx) &\coloneqq \frac{Z_\bS(\bW)}{Z(\bW)}  \sum_{\by \in \cS} \frac{e^{-E(\by; \bW)}}{Z_\bS(\bW)} p_{\Phi}(\by, \bx) \\
 &=   \sum_{\by \in \cS} p(\by;\bW) p_{\Phi}(\by, \bx)\\
 &=  \sum_{\by \in \cS} p(\bx;\bW) p_{\Phi}(\bx, \by) \qquad \text{see$^\dagger$}\\
 &=p(\bx; \bW) p_{\Phi'}(\bx,\bS),
\end{align*}
$\dagger$from the time reversibility of $\Phi(\bW)$.
Thus, all states $\bx \in \Omega' \backslash \{\bS\}$ in $\Phi'(\bW)$ have the same steady state distribution as in $\Phi(\bW)$: $p(\bx;\bW)$.
\end{proof}

\subsection{Proof of Lemma~\ref{c:perfect}}
\begin{lemma*}[Perfect sampling of tours]
Let $$\cC_k = \{(\bx,\bX^{(i)}(2),\ldots, \bX^{(i)}(k))\}_{i}$$ be a set of tours of length $k \leq K$, with $\bx$ sampled from $\cS$ according to some distribution.

Then, there exists a distribution $G_k$ such that the random variables
\begin{equation}
\cG_k \coloneqq \{g(\sigma) : \: \forall \sigma \in \cC_k\}
\end{equation}
are i.i.d.\ samples of $G_k$, with $g$ defined over the appropriate $\sigma$-algebra (e.g., $k$ RBM states) with $\Vert g(\cdot) \Vert_1 \leq \infty$. 

Moreover, if we perform $M$ tours, these tours finish in finite time and $\{\xi^{(r)}\}_{r =1}^{M}$ is an i.i.d.\ sequence with a well-defined probability distribution $p(\xi^{(\cdot)} = k)$.
\end{lemma*}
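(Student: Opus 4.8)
The plan is to realize each tour as a regeneration cycle of the collapsed chain $\Phi'(\bW)$ of Corollary~\ref{c:Phi}, and then read off the i.i.d.\ structure and the finiteness claims from the renewal theory of finite irreducible Markov chains. By Corollary~\ref{c:Phi}, $\Phi'(\bW)$ is ergodic and time-reversible with a single distinguished state $\bS$ formed by collapsing $\cS$, and its transitions in and out of $\bS$ are simulated by first drawing a start state $\bx \in \cS$ (from the fixed distribution named in the statement) and then running $\Phi(\bW)$ until it re-enters $\cS$. A tour is exactly the excursion of $\Phi'(\bW)$ away from $\bS$ up to its first return to $\bS$, so the entire lemma reduces to properties of these excursions.

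First I would establish that the tours are i.i.d. Each tour is generated by (i) an independent draw of the start state from the \emph{same} fixed distribution on $\cS$, and (ii) an independent run of the kernel of $\Phi(\bW)$ until the first hit of $\cS$. Since distinct tours use independent randomness and the identical generating recipe, they are mutually independent and identically distributed; equivalently, this is the strong Markov property applied at the successive return times to $\bS$, which the preliminaries already invoke to separate $\Xi^{(r)}$ from $\Xi^{(s)}$. Because the tour length $\xi^{(r)}$ is a measurable function of the $r$-th tour, the sequence $\{\xi^{(r)}\}_{r=1}^{M}$ is i.i.d. as well.

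Next I would prove finiteness and the well-definedness of $p(\xi^{(\cdot)} = k)$. The state space $\Omega = V \times H$ is finite, so $\Phi'(\bW)$ is a finite irreducible chain and is therefore positive recurrent; consequently the return time $\xi$ to $\bS$ is almost surely finite and has finite mean. Hence every tour finishes in finite time almost surely, and $p(\xi = k) = \Pr[\xi = k]$ is a bona fide probability distribution on the positive integers, summing to one precisely because $\Pr[\xi < \infty] = 1$.

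Finally, for the $\cG_k$ claim I would invoke the standard fact that filtering an i.i.d.\ sequence by a per-element measurable event preserves the i.i.d.\ property, with the conditional law as the common distribution. The set $\cC_k$ is obtained by keeping exactly those tours with $\xi = k$; since $\{\xi = k\}$ is determined by each tour separately and (for any $k$ with $p(\xi = k) > 0$) has positive probability, the retained tours are i.i.d.\ with the law of a tour conditioned on $\xi = k$. I would then define $G_k$ as the pushforward of this conditional law under $g$, where the hypothesis $\Vert g(\cdot) \Vert_1 < \infty$ guarantees $G_k$ is a well-defined distribution with finite first moment, so that $\cG_k = \{g(\sigma) : \sigma \in \cC_k\}$ are i.i.d.\ samples of $G_k$. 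The main obstacle, and the only genuinely subtle step, is this conditioning/filtering argument: one must verify that selecting the length-$k$ sub-collection from an i.i.d.\ stream introduces no dependence and that conditioning on $\{\xi = k\}$ is legitimate. Everything else reduces to positive recurrence of a finite chain together with the independence of regeneration cycles already granted by Corollary~\ref{c:Phi} and the strong Markov property.
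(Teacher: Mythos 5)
Your proposal is correct and takes essentially the same route as the paper's proof: both realize tours as regeneration cycles (excursions from $\bS$) of the collapsed chain $\Phi'(\bW)$, obtain i.i.d.-ness from the strong Markov property at successive returns to $\bS$, and derive almost-sure finiteness of the tour lengths from positive recurrence --- the paper citing Kac's theorem to get $\EE[\xi^{(\cdot)}] < \infty$, while you argue directly from finiteness of the state space, which is equivalent here. Your explicit justification of the filtering/conditioning step (that selecting the length-$k$ tours from an i.i.d.\ stream yields i.i.d.\ samples from the conditional law, whose pushforward under $g$ defines $G_k$) is a careful elaboration of what the paper compresses into a single appeal to the strong Markov property.
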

\begin{proof}
Consider an infinite run of the MCMC $\Phi'(\bW)$: $\bX(1),\bX(2),\ldots$, starting at state $\bX(1) = \bS$.
Divide this infinite run into tours, the longest segments of consecutive states that start at state $\bS$ but do not contain $\bS$ in any other states in the segment.
Let $\xi^{(r)}$ be the length of the $r$-th tour.
Because $\Phi'(\bW)$ is an irreducible Markov chain, it is positive recurrent~\cite[Theorem 6.3.8]{gallager2013stochastic}, and we can use Kac's theorem~\cite[Theorem 10.2.2]{meyn2012markov} to assert that $\EE[\xi^{(\cdot)}] < \infty$, which also implies $\xi^{(\cdot)} < \infty$ almost surely (i.e., except for a set of measure zero).
Define $R_{r+1} = R_r + \xi^{(r+1)}$, with $R_0 = 0$.
Define
\begin{equation*}
\begin{split}
	\cG_k = \{g(\bX(R_{r-1}),\ldots,\bX(R_{r}-1)) :\\
			 \,\,\,\, r=1,\ldots,M ,\xi^{(r)} = k\},
\end{split}
\end{equation*}
with $M > 1$.
By the strong Markov property, there exists a distribution $G_k$ such that $\cG_k$ is an iid sequence from $G_k$.
Note that  $\{\xi^{(r)}\}_{r =1}^{M}$ is also iid.
Further, note that by Corollary~\ref{c:Phi} we can equivalently consider the MC $\Phi(\bW)$, starting at state $\bx$ sampled from the stopping set $\cS$, which concludes the proof.
\end{proof}

\subsection{Proof of Theorems~\ref{t:Ft} and~\ref{t:unbiasSuperF}}
\begin{proof}
For simplicity, in what follows we combine the proofs of Theorems~\ref{t:Ft} and~\ref{t:unbiasSuperF}, specializing on each case when necessary.
Define for all $r \geq 0$, $R_{r+1} = R_r + \xi^{(r+1)}$ and for $t \geq 0$, $N(t) = \argmax_r \one_{\{R_{r-1} < t\}}$, with $R_0 = 0$.
$N(t)$ counts how many of the tours in the sequence $\{\xi^{(r)}\}_{r \geq 1}$ are needed to add up to the largest number smaller than $t$.
Let 
$$
Y^{(r)}_K = \one\{\xi^{(r)} \leq K\} \sum_{t=R_{r-1}+1}^{R_{r}}  f(\bX^{(r)}(t-R_{r-1})).
$$ 
By Lemma~\ref{c:perfect}, both $\{Y^{(r)}_K\}_{r \geq 1}$ and $\{\xi^{(r)}\}_{r \geq 1}$ are iid sequences.
Also, even in the case $K \to \infty$,
\begin{equation*}
\begin{split}
	\EE[\Vert Y^{(\cdot)}_K \Vert_1] & \leq  \EE[\sup_\bx \xi^{(\cdot)} \Vert f(\bx) \Vert_1] = \EE[\xi^{(\cdot)}]  \sup_\bx  \Vert f(\bx) \Vert_1 \\ 
							&< \infty,
\end{split}
\end{equation*}
as by definition $\Vert f(\cdot)\Vert_1 < \infty$ and we know $\EE[\xi^{(\cdot)}] < \infty$ (see Lemma~\ref{c:perfect}).
The Renewal-Reward Theorem~\cite[Theorem 4.2]{bremaud2013markov} yields, $r \geq 1 $
\begin{equation}\label{e:RR}
\lim_{t \to \infty} \frac{\sum_{r=1}^{N(t)} Y_K^{(r)}}{t} = \frac{\EE\left[\one_{\{\xi^{(\cdot)} \leq K\}} \sum_{k=1}^{\xi^{(\cdot)}}  f(\bX^{(\cdot)}(k))\right]}{\EE[\xi^{(\cdot)}]} .
\end{equation}
Note that, 
\begin{align*}
& \frac{\sum_{r=1}^{N(t)} Y_K^{(r)}}{t} = \\
&\,\,\,\,\, \frac{\one_{\{\xi^{(N(t'))} \leq K\}} \sum_{t'=1}^{R_{N(t)}} f(\bX^{(N(t'))}(t'-R_{N(t')-1}))}{t} \\ 
&= \frac{\sum_{t'=1}^{R_{N(t)}} \one_{\{\xi^{(N(t'))} \leq K\}} f(\bX^{(N(t'))}(t' - R_{N(t')-1}))}{R_{N(t)}} \\
&\,\,\,\,\,\,\,\,\,\,\,\,\,\,\,\,\,\,\,\, \cdot \frac{R_{N(t)}}{t} .
\end{align*}
Most importantly, $\lim_{t \to \infty} \frac{R_{N(t)}}{t}  = 1$, as by definition $R_{N(t)} + \xi^{(N(t)+1)} > t$, and  $\lim_{t\to\infty} \frac{\xi^{(N(t)+1)}}{t} \overset{}{=} 0$, otherwise an infinitely large $\xi^{(N(t)+1)}$ would have non-zero measure, contradicting $\EE[\xi^{(N(t)+1)}] < \infty$.
This yields,
\begin{align*}
&\lim_{t \to \infty} \frac{\sum_{r=1}^{N(t)} Y_K^{(r)}}{t} \\
&=\lim_{t \to \infty} \frac{\sum_{t'=1}^{R_{N(t)}} \one_{\{\xi^{(N(t'))} \leq K\}} f(\bX^{(N(t'))}(t' - R_{N(t')-1}))}{R_{N(t)}}.
\end{align*}
\paragraph{(Theorem~\ref{t:unbiasSuperF}) The case of $K_\text{dyn}$:}
As $K_\text{dyn}$ is finite almost surely (see proof of Lemma~\ref{c:perfect}), it makes the condition $\one_{\{\xi^{(N(t'))} \leq K_\text{dyn}\}} \coloneqq 1$.
Note that the sequence $\{\bX^{(N(t'))}(t' - R_{N(t')-1})\}_{t'=1}^{R_{N(t)}}$ is just a single sample path of our MC starting at state $\bx'$, taking $R_{N(t)}$ steps.
As our MC is irreducible and time-reversible, there is solution and the solution is unique~\cite[Theorem 6.3.8]{gallager2013stochastic}, and thus we can use the ergodic theorem to show
\begin{equation*}
\begin{split}
	\lim_{t \to \infty} \frac{\sum_{t'=1}^{R_{N(t)}} f(\bX^{(N(t'))}(t' - R_{N(t')-1}))}{R_{N(t)}} = \\ 
										\sum_\bx f(\bx) p(\bx ; \bW),
\end{split}
\end{equation*}
and substituting the above equation in \eqref{e:RR}, yields
\begin{equation}\label{e:Zf}
\EE\left[\sum_{k=1}^{\xi^{(\cdot)}} f(\bX^{(\cdot)}(k))\right] = \EE[\xi^{(\cdot)}] \sum_\bx f(\bx) p(\bx ; \bW) ,\quad r \geq 1.
\end{equation}
Finally, by Kac's theorem~\cite[Theorem 10.2.2]{meyn2012markov}, 
\begin{equation}\label{e:xiinv}
\EE[\xi^{(\cdot)}] = \frac{1}{p(\bx';\bW)} = \frac{Z(\bW)}{e^{-E(\bx'; \bW)}},
\end{equation}
as $p(\bx';\bW)$ is the steady state probability of visiting state $\bx'$.
Replacing \eqref{e:xiinv} into \eqref{e:Zf} and multiplying it by $e^{-E(\bx'; \bW)}$ on both sides concludes the unbiasedness proof.
Thus, if $\hat{F}^{(K_\text{dyn})}_r(\bW,f)$ denotes the estimator $\hat{F}$ in eq.\eqref{eq:Ft} applied to only a single tour $r  = 1,\ldots,R$.
Then, $\EE[F^{(K_\text{dyn})}_r(\bW,f)] = F(\bW,f)$ and the sequence $\{F^{(K_\text{dyn})}_r(\bW,f)\}_{r\geq 1}$ is trivially iid by the strong Markov property.
This iid sequence guarantees the following convergence properties.

Error bound:
Note that $ \sum_{k=1}^{\xi^{(r)}} \frac{ \partial  E(\bX^{(r)}(k); \bW) }{\partial \bW}$ is upper bounded by $\xi^{(r)} B$. 
As $\Phi(\bW)$ is time-reversible, it is equivalent to a random walk on a weighted graph. 
Thus, Lemma 2(i) of Avrachenkov et al.~\cite{avrachenkov2016inference} applies with $Z(\bW) = 2 d_\text{tot}$, $Z_\cS(\bW) = d_{\cS_n}$, and we have 
$$\text{var}(\hat{F}^{(K_\text{dyn})}_1(\bW)) \leq B^2 \left((Z(\bW))^2/(Z_\cS(\bW) \delta) + 1\right).$$
By the strong Markov property the tours are independent, thus, $\text{var}(\hat{F}^{(K_\text{dyn},R)}(\bW)) = \text{var}(\hat{F}^{(K_\text{dyn})}_1(\bW))/R$ by the Bienaym\'e formula.
And we have already shown that the estimate of $\hat{F}^{(K_\text{dyn})}_1(\bW)$ is unbiased.
Finally, we obtain the bound through the application of Chebyshev's inequality.

\paragraph{(Theorem~\ref{t:Ft}) The case of $K$:}
From above, $\hat{F}^{(K_\text{dyn},R)}(\bW,f)$ is an unbiased estimate of  $F(\bW,f)$.
The remaining of the proof is straightforward. The tours are independent. Thus $\EE[\hat{F}^{(K,R)}(\bW)] = \EE[\hat{F}^{(K,1)}(\bW)]$.
Note that
\[
E[\xi^{(\cdot)}] - \sum_{k=1}^{K-1} k P[\xi^{(\cdot)} = k] = \sum_{k=K}^{\infty} k P[\xi^{(\cdot)} = k],
\]
and as $B$ upper bounds $\Vert f(\cdot) \Vert_1$, then the bias $\EE[\hat{F}^{(K_\text{dyn},1)}(\bW) - \hat{F}^{(K,R)}(\bW) ]$ can be at most $(E[\xi^{(\cdot)}] - \sum_{k=1}^{K-1} k P[\xi^{(\cdot)} = k]) \cdot B$.

\end{proof}

\subsection*{Theorem~\ref{t:geotail}}
\begin{proof}[Proof of Theorem~\ref{t:geotail}]
Note that the condition 
$$
\inf_{\bx \in \Omega \backslash \cS} \sum_{\by \in \cS} p_\Phi (\bx,\by) \geq \epsilon
$$
ensures that the MC $\Phi'(\bW)$ satisfies Doeblin's condition,
and therefore $\Phi'(\bW)$ is geometrically ergodic with convergence rate $(1-\epsilon)$~\cite[pp.\ 30]{Strook1995}.
Finally, by Kendall's theorem~\cite[Theorem 15.1.1]{meyn2012markov}, a geometric ergodicity and a geometric decay in the tail of the return time distribution are equivalent conditions.
\end{proof}

\subsection*{Proof of Corollary~\ref{c:lv}}
\begin{proof}
An unbiased estimate of $\nabla_\bW \cL_Z$ for one tour is obtained from $F^{(K_\text{dyn},R)}(\bW,f)$ in eq.~\eqref{e:FprimeSuper} of Theorem~\ref{t:unbiasSuperF} with $f(\by) =  \frac{1}{N} \sum_{n=1}^N \frac{\partial E(\bx_n; \bW)}{\partial \bW} - \frac{\partial E(\by; \bW) }{\partial \bW} $. Averaging the gradient of each tour over $R \geq 1$ tours gives the desired result.
\end{proof}
\subsection*{Source Code}
Our source code and detailed results are hosted at \texttt{https://github.com/PurdueMINDS/MCLV-RBM}.

\balance
\fontsize{9pt}{10pt} \selectfont
\bibliography{Saravese_Ribeiro2017}

\begin{thebibliography}{}

\bibitem[\protect\citeauthoryear{Aldous and Fill}{2002}]{aldous2002reversible}
Aldous, D., and Fill, J.
\newblock 2002.
\newblock Reversible markov chains and random walks on graphs.

\bibitem[\protect\citeauthoryear{Avrachenkov, Ribeiro, and
  Sreedharan}{2016}]{avrachenkov2016inference}
Avrachenkov, K.; Ribeiro, B.; and Sreedharan, J.~K.
\newblock 2016.
\newblock {Inference in OSNs via Lightweight Partial Crawls}.
\newblock In {\em SIGMETRICS}.

\bibitem[\protect\citeauthoryear{Baxendale}{2005}]{Baxendale2005}
Baxendale, P.~H.
\newblock 2005.
\newblock {Renewal theory and computable convergence rates for geometrically
  ergodic Markov chains}.
\newblock {\em Ann. Appl. Probab.}

\bibitem[\protect\citeauthoryear{Bengio and Delalleau}{2009}]{Bengio2009}
Bengio, Y., and Delalleau, O.
\newblock 2009.
\newblock {Justifying and Generalizing Contrastive Divergence}.
\newblock {\em Neural Computation}.

\bibitem[\protect\citeauthoryear{Br{\'e}maud}{2013}]{bremaud2013markov}
Br{\'e}maud, P.
\newblock 2013.
\newblock {\em Markov chains: Gibbs fields, Monte Carlo simulation, and
  queues}.
\newblock Springer.

\bibitem[\protect\citeauthoryear{Bubley and Dyer}{1997}]{Bubley1997}
Bubley, R., and Dyer, M.
\newblock 1997.
\newblock {Path coupling: A technique for proving rapid mixing in Markov
  chains}.
\newblock In {\em FOCS}.

\bibitem[\protect\citeauthoryear{Buchholz}{1994}]{buchholz1994exact}
Buchholz, P.
\newblock 1994.
\newblock Exact and ordinary lumpability in finite markov chains.
\newblock {\em Journal of Applied Probability}.

\bibitem[\protect\citeauthoryear{Carreira-Perpi{\~{n}}{\'{a}}n and
  Hinton}{2005}]{Carreira-Perpinan2005}
Carreira-Perpi{\~{n}}{\'{a}}n, M.~A., and Hinton, G.~E.
\newblock 2005.
\newblock {On Contrastive Divergence Learning}.
\newblock In {\em AISTATS}.

\bibitem[\protect\citeauthoryear{Corcoran and Tweedie}{2002}]{Corcoran2002}
Corcoran, J., and Tweedie, R.
\newblock 2002.
\newblock {Perfect sampling from independent Metropolis-Hastings chains}.
\newblock {\em J. Stat. Plan. Inference}.

\bibitem[\protect\citeauthoryear{Erhan \bgroup et al\mbox.\egroup
  }{2010}]{erhan2010does}
Erhan, D.; Bengio, Y.; Courville, A.; Manzagol, P.-A.; Vincent, P.; and Bengio,
  S.
\newblock 2010.
\newblock Why does unsupervised pre-training help deep learning?
\newblock {\em JMLR}.

\bibitem[\protect\citeauthoryear{Fill \bgroup et al\mbox.\egroup
  }{2000}]{fill2000extension}
Fill, J.~A.; Machida, M.; Murdoch, D.~J.; and Rosenthal, J.~S.
\newblock 2000.
\newblock {Extension of Fill's perfect rejection sampling algorithm to general
  chains}.
\newblock {\em RSA}.

\bibitem[\protect\citeauthoryear{Fill}{1997}]{fill1997interruptible}
Fill, J.~A.
\newblock 1997.
\newblock An interruptible algorithm for perfect sampling via markov chains.
\newblock In {\em STOC}.

\bibitem[\protect\citeauthoryear{Fischer and Igel}{2014}]{Fischer2014}
Fischer, A., and Igel, C.
\newblock 2014.
\newblock {Training restricted Boltzmann machines: An introduction}.
\newblock {\em Pattern Recognit.}

\bibitem[\protect\citeauthoryear{Gallager}{2013}]{gallager2013stochastic}
Gallager, R.~G.
\newblock 2013.
\newblock {\em Stochastic processes: theory for applications}.
\newblock Cambridge University Press.

\bibitem[\protect\citeauthoryear{Gilks, Roberts, and Sahu}{1998}]{Gilks1998a}
Gilks, W.~R.; Roberts, G.~O.; and Sahu, S.~K.
\newblock 1998.
\newblock {Adaptive Markov Chain Monte Carlo through Regeneration}.
\newblock {\em J. AMSTAT}.

\bibitem[\protect\citeauthoryear{Hinton, Osindero, and
  Teh}{2006}]{hinton2006fast}
Hinton, G.~E.; Osindero, S.; and Teh, Y.-W.
\newblock 2006.
\newblock A fast learning algorithm for deep belief nets.
\newblock {\em Neural Computation}.

\bibitem[\protect\citeauthoryear{Hinton}{2002}]{Hinton2002}
Hinton, G.~E.
\newblock 2002.
\newblock {Training products of experts by minimizing contrastive divergence}.
\newblock {\em Neural Computation}.

\bibitem[\protect\citeauthoryear{Hinton}{2012}]{Hinton2012}
Hinton, G.~E.
\newblock 2012.
\newblock {A Practical Guide to Training Restricted Boltzmann Machines}.
\newblock In {\em Neural networks: Tricks of the trade}. Springer.

\bibitem[\protect\citeauthoryear{Hobert \bgroup et al\mbox.\egroup
  }{2002}]{Hobert2002}
Hobert, J.~P.; Jones, G.~L.; Presnell, B.; and Rosenthal, J.~S.
\newblock 2002.
\newblock {On the applicability of regenerative simulation in Markov chain
  Monte Carlo}.
\newblock {\em Biometrika} (4).

\bibitem[\protect\citeauthoryear{Long and Servedio}{2010}]{long2010restricted}
Long, P.~M., and Servedio, R.
\newblock 2010.
\newblock Restricted boltzmann machines are hard to approximately evaluate or
  simulate.
\newblock In {\em ICML}.

\bibitem[\protect\citeauthoryear{Lyne \bgroup et al\mbox.\egroup
  }{2015}]{lyne2015russian}
Lyne, A.-M.; Girolami, M.; Atchad{\'e}, Y.; Strathmann, H.; Simpson, D.; et~al.
\newblock 2015.
\newblock {On Russian roulette estimates for Bayesian inference with
  doubly-intractable likelihoods}.
\newblock {\em Statistical science}.

\bibitem[\protect\citeauthoryear{Meyn and Tweedie}{2012}]{meyn2012markov}
Meyn, S.~P., and Tweedie, R.~L.
\newblock 2012.
\newblock {\em Markov chains and stochastic stability}.
\newblock Springer.

\bibitem[\protect\citeauthoryear{Mont{\'u}far and
  Morton}{2015}]{montufar2015discrete}
Mont{\'u}far, G., and Morton, J.
\newblock 2015.
\newblock Discrete restricted boltzmann machines.
\newblock {\em JMLR}.

\bibitem[\protect\citeauthoryear{Mykland, Tierney, and Yu}{1995}]{Mykland1995}
Mykland, P.; Tierney, L.; and Yu, B.
\newblock 1995.
\newblock {Regeneration in Markov chain samplers}.
\newblock {\em J. AMSTAT} (429).

\bibitem[\protect\citeauthoryear{Neal}{2001}]{RadfordNeal2001}
Neal, R.~M.
\newblock 2001.
\newblock {Annealed Importance Sampling}.
\newblock {\em Statistics and Computing} 11(2):125--139.

\bibitem[\protect\citeauthoryear{Neal}{2005}]{neal2005estimating}
Neal, R.~M.
\newblock 2005.
\newblock {Estimating Ratios of Normalizing Constants using Linked Importance
  Sampling}.
\newblock {\em arXiv preprint math/0511216}.

\bibitem[\protect\citeauthoryear{Ollivier}{2009}]{Ollivier2009}
Ollivier, Y.
\newblock 2009.
\newblock {Ricci curvature of Markov chains on metric spaces}.
\newblock {\em J. Funct. Anal.}

\bibitem[\protect\citeauthoryear{Prats, {Romero Merino}, and
  Castrillejo}{2014}]{Prats2014}
Prats, D.~B.; {Romero Merino}, E.; and Castrillejo, F.~M.
\newblock 2014.
\newblock {Stopping Criteria in Contrastive Divergence: Alternatives to the
  Reconstruction Error}.
\newblock In {\em ICML}.

\bibitem[\protect\citeauthoryear{Propp and Wilson}{1996}]{propp1996exact}
Propp, J.~G., and Wilson, D.~B.
\newblock 1996.
\newblock Exact sampling with coupled markov chains and applications to
  statistical mechanics.
\newblock {\em RSA}.

\bibitem[\protect\citeauthoryear{Propp and Wilson}{1998}]{propp1998get}
Propp, J.~G., and Wilson, D.~B.
\newblock 1998.
\newblock How to get a perfectly random sample from a generic markov chain and
  generate a random spanning tree of a directed graph.
\newblock {\em Journal of Algorithms}.

\bibitem[\protect\citeauthoryear{Roberts and Tweedie}{1999}]{Roberts1999}
Roberts, G.~O., and Tweedie, R.~L.
\newblock 1999.
\newblock {Bounds on regeneration times and convergence rates for Markov
  chains}.
\newblock {\em Stoch. Process. their Appl.}

\bibitem[\protect\citeauthoryear{Salakhutdinov and
  Hinton}{2009}]{Salakhutdinov2009}
Salakhutdinov, R., and Hinton, G.
\newblock 2009.
\newblock {Deep Boltzmann Machines}.
\newblock {\em JMLR}.

\bibitem[\protect\citeauthoryear{Salakhutdinov and
  Murray}{2008}]{salakhutdinov2008quantitative}
Salakhutdinov, R., and Murray, I.
\newblock 2008.
\newblock On the quantitative analysis of deep belief networks.
\newblock In {\em ICML}.

\bibitem[\protect\citeauthoryear{Schulz, M{\"{u}}ller, and
  Behnke}{2010}]{Schulz2010}
Schulz, H.; M{\"{u}}ller, A.; and Behnke, S.
\newblock 2010.
\newblock {Investigating Convergence of Restricted Boltzmann Machine Learning}.
\newblock In {\em NIPS}.

\bibitem[\protect\citeauthoryear{Smolensky}{1986}]{smolensky1986information}
Smolensky, P.
\newblock 1986.
\newblock Information processing in dynamical systems: Foundations of harmony
  theory, tech. rep. dtic.

\bibitem[\protect\citeauthoryear{Stewart}{1994}]{stewart1994introduction}
Stewart, W.~J.
\newblock 1994.
\newblock {\em Introduction to the numerical solutions of Markov chains}.
\newblock Princeton Univ. Press.

\bibitem[\protect\citeauthoryear{Stroock}{1995}]{Strook1995}
Stroock, D.
\newblock 1995.
\newblock {\em An introduction to Markov processes}.
\newblock Springer.

\bibitem[\protect\citeauthoryear{Sutskever and Tieleman}{2010}]{Sutskever2010}
Sutskever, I., and Tieleman, T.
\newblock 2010.
\newblock {On the Convergence Properties of Contrastive Divergence}.
\newblock In {\em AISTATS}.

\bibitem[\protect\citeauthoryear{Tieleman}{2008}]{Tieleman2008}
Tieleman, T.
\newblock 2008.
\newblock {Training restricted Boltzmann machines using approximations to the
  likelihood gradient}.
\newblock In {\em ICML}.

\bibitem[\protect\citeauthoryear{Wilson}{1983}]{wilson1983inspection}
Wilson, J.~R.
\newblock 1983.
\newblock The inspection paradox in renewal-reward processes.
\newblock {\em Operations Research Letters}.

\bibitem[\protect\citeauthoryear{Wilson}{2000}]{wilson2000layered}
Wilson, D.~B.
\newblock 2000.
\newblock Layered multishift coupling for use in perfect sampling algorithms
  (with a primer on cftp).
\newblock {\em Monte Carlo Methods}.

\bibitem[\protect\citeauthoryear{Yuille}{2005}]{Yuille2005}
Yuille, A.
\newblock 2005.
\newblock {The Convergence of Contrastive Divergences}.
\newblock In {\em NIPS}.

\end{thebibliography}
\bibliographystyle{aaai}
\end{document}